\newtheorem{theorem}{Theorem}
\newtheorem{assumption}{Assumption}
\begin{document}

% Title and authors
\title{Multilinear Tensor Low-Rank Approximation for Policy-Gradient Methods in Reinforcement Learning}

\author{Sergio~Rozada,~\IEEEmembership{Student Member,~IEEE,} Hoi-To~Wai,~\IEEEmembership{Member,~IEEE,}
        and~Antonio~G.~Marques,~\IEEEmembership{Senior Member,~IEEE}

\thanks{S. Rozada and A. G. Marques are with the Department
of Signal Theory and Comms., King Juan Carlos University, Madrid, Spain, s.rozada.2019@alumnos.urjc.es, antonio.garcia.marques@urjc.es. HT Wai is with the Department of Systems Eng. and Eng. Management, in The Chinese University of Hong Kong (CUHK), Hong Kong, China, htwai@cuhk.edu.hk.}%

\thanks{Work partially funded by the Spanish NSF (MCIN/AEI/10.13039/ 501100011033) Grants PID2019-105032GB-I00, TED2021-130347B-I00, and PID2022-136887NB-I00, the “European Union NextGenerationEU/PRTR”, and the Community of Madrid via the Ellis Madrid Unit and grant TEC-2024/COM-89.}}

\maketitle

%%%%%%%%%%%%%%%%%%%%%%%%%%% ABSTRACT %%%%%%%%%%%%%%%%%%%%%%%%%%%%%%%%%%

\begin{abstract}
Reinforcement learning (RL) aims to estimate the action to take given a (time-varying) state, with the goal of maximizing a cumulative reward function. Predominantly, there are two families of algorithms to solve RL problems: value-based and policy-based methods, with the latter designed to learn a probabilistic parametric policy from states to actions. Most contemporary approaches implement this policy using a neural network (NN). However, NNs usually face issues related to convergence, architectural suitability, hyper-parameter selection, and underutilization of the redundancies of the state-action representations (e.g. locally similar states). This paper postulates multi-linear mappings to efficiently estimate the parameters of the RL policy. More precisely, we leverage the PARAFAC decomposition to design \emph{tensor low-rank} policies. The key idea involves collecting the policy parameters into a tensor and leveraging tensor-completion techniques to enforce  \emph{low rank}. We establish theoretical guarantees of the proposed methods for various policy classes and validate their efficacy through numerical experiments. Specifically, we demonstrate that \emph{tensor low-rank} policy models reduce computational and sample complexities in comparison to NN models while achieving similar rewards.
\end{abstract}

\begin{IEEEkeywords}
Reinforcement Learning, Policy Gradients, Trust-Region Methods, Low-rank Approximation, Tensors.
\end{IEEEkeywords}

%%%%%%%%%%%%%%%%%%%%%%%%% INTRODUCTION %%%%%%%%%%%%%%%%%%%%%%%%%%%%%%%%
\section{Introduction}

We inhabit a world characterized by growing complexity and interdependence, where gadgets produce vast streams of data on a daily basis. Computer systems need to become intelligent to make decisions autonomously in order to navigate such complex landscapes with little human supervision. Among the various frameworks suggested for developing autonomous intelligent systems, Reinforcement Learning (RL) has emerged as particularly promising. Broadly speaking, software agents in RL learn by interacting with the world by trial and error \cite{sutton2018reinforcement, bertsekas2019reinforcement}. RL has contributed significantly to pivotal achievements within the Artificial Intelligence (AI) community. Notable examples of the latter include AlphaGo, a computer software that beat the world champion in the game of Go \cite{silver2016mastering, silver2017mastering}, and recently, groundbreaking chatbots like ChatGPT \cite{brown2020language}.

From a technical standpoint, RL is rooted in stochastic approximation for Dynamic Programming (DP). RL solves a sequential optimization problem, aiming to maximize a reward signal over time \cite{bertsekas2000dynamic, bertsekas1996neuro}. In this framework, the world, or environment, is typically represented as a Markov Decision Process (MDP) \cite{sutton2018reinforcement}. Succinctly, the environment consists of a set of states, or state space, where the agents take actions from a set of actions, or action space. Notably, the states and the actions are usually multi-dimensional. Unlike DP, RL does not assume knowledge of the underlying dynamics of the MDP model, with learning taking place from sampled trajectories. In a nutshell, the goal of RL is to learn a \emph{state-action map}, or \emph{policy}, that maximizes the expected cumulative reward from samples of the MDP. Originally, value-based methods proposed to learn the optimal value functions (VFs) within the MDP first, to then infer the optimal policy \cite{sutton2018reinforcement, bertsekas2019reinforcement}. Yet, value-based methods face problems due to the curse of dimensionality. An alternative to overcome this limitation is offered by policy-based RL, which proposes to learn directly a policy, which almost always is assumed to be probabilistic and parametric \cite{sutton1999policy}. Therefore, the problem shifts from learning the VFs to estimate the parameters of a probabilistic mapping. Typically, Gaussian distributions are employed for continuous action spaces, and softmax distributions for discrete action spaces \cite{paternain2020stochastic, chen2021communication, luo2017learning}.

The conventional approach to solve policy-based RL problems involves learning a mapping from states to the \emph{parameters of the probability distribution} via gradient-based algorithms, also known as policy-gradient methods (PGs) \cite{arulkumaran2017deep}. Designing policy maps involves balancing expressivity and tractability. Initially, linear models were favored for their simplicity and ease of understanding, but they lacked expressive power and required significant feature engineering. More recently, neural networks (NNs) have emerged as an alternative to handle more complex mapping tasks. However, finding the adequate NN architecture is often a challenge on its own. A poor selection of the architecture may lead to convergence issues and moreover, even when the architecture seems fitting, NN training can be time-consuming. To narrow down the space of potential models and to better condition the problem, a practical strategy involves postulating parsimonious models that capitalize on the problem's inherent structure. Notably, leveraging sparsity has been a widely adopted tool to design policy-based algorithms \cite{lever2016compressed, tolstaya2018nonparametric, hu2022ddpg}. Moreover, low-rank has been recently explored in the context of value-based RL to enforce low rank in matrix and tensor representations of the VFs \cite{yang2019harnessing, tsai2021tensor, rozada2024tensor}. Intriguingly, little research has been carried out on low-rank models for policy-based RL \cite{rozada2023matrix}. This gap is the main driver of this paper, that puts forth low-rank policy models that give rise to \emph{efficient} and \emph{fast} policy-based RL algorithms. Given the potentially high dimensionality of the state space, we propose to collect the state-dependent parameters of the policy probability distribution in a tensor of parameters, to then enforce low rank via tensor completion techniques. This gives rise to multi-linear \emph{tensor low-rank} policy models can be seamlessly integrated into classical policy-based RL algorithms, as we will show in the following sections.

Low-rank methods have been widely applied in optimization, notably in the design of matrix completion algorithms \cite{eckart1936approximation, markovsky2012low, udell2016generalized}. In the realm of tensors, the multidimensional generalization of matrices, low rank has also found success in developing resource-efficient optimization methods \cite{kolda2009tensor, sidiropoulos2017tensor}. Interestingly, MDPs can be low rank, meaning that transition and reward dynamics are low rank \cite{agarwal2020flambe, uehara2021representation}. Low-rankness yields the creation of efficient model-based algorithms, where enforcing low rank in a matrix or a tensor representation of the transition dynamics has been instrumental \cite{barreto2016incremental, jiang2017contextual, azizzadenesheli2016reinforcement, mahajan2021tesseract}. Moreover, recent empirical studies have revealed that matrix and tensor representations of VFs are oftentimes low rank \cite{yang2019harnessing, rozada2024tensor}. This has led to value-based methods that reduce sample complexity \cite{sam2023overcoming, shah2020sample} and showed faster empirical convergence rates \cite{yang2019harnessing, tsai2021tensor, rozada2021low, rozada2024tensor}. While the exploration of low rank in policy-based RL has not been as extensive, recent developments introduced matrix-completion techniques to design Gaussian policy models in actor-critic (AC) methods and in trust-region policy optimization (TRPO) \cite{rozada2023matrix, rozada2023trust}. However, matrix-based policies do not scale well to large state-action spaces, and as action spaces are not always continuous, Gaussian policies can often be a poor choice.

Motivated by these discoveries, this paper aims to present a comprehensive methodology for crafting low-rank policy-based RL algorithms. Our specific contributions include:

\begin{itemize}
    \item[\textbf{C1}] Policy \emph{models} that enforce low rank in a tensor of Gaussian-policy parameters and softmax-policy parameters using the PARAFAC decomposition. This is equivalent to postulate a multilinear model for the policy.
    \item[\textbf{C2}] Policy-based \emph{algorithms}, including REINFORCE, AC, TRPO and PPO, leveraging tensor low-rank (multilinear) policy models.
    \item[\textbf{C3}] Convergence guarantees for the proposed tensor low-rank (multilinear) PG algorithms.
    \item[\textbf{C4}] \emph{Numerical experiments} illustrating that the proposed low-rank methods converge faster than NN-based schemes.
\end{itemize}

%%%%%%%%%%%%%%%%%%%%%%%%% PRELIMINARIES %%%%%%%%%%%%%%%%%%%%%%%%%%%%%%%
\section{Preliminaries}

The objective of this section is to introduce notation, cover the basics of RL, and provide a concise overview of tensor low-rank methodologies.

\subsection{Reinforcement Learning}

RL is a framework to solve sequential optimization problems in Markovian setups \cite{sutton2018reinforcement, bertsekas2019reinforcement}. Not surprisingly, the main mathematical formalism is the Markov Decision Process (MDP), which defines the environment as a set of states $\ccalS$ and a set of actions $\ccalA$. The dynamics of the MDP are encoded in the probability transition function $p \left( s' | s, a \right)$, which defines the probability of transitioning from state $s \in \ccalS$ to state $s' \in \ccalS$ after taking action $a \in \ccalA$. The initial state is distributed according to $p_1$. The reward function $r \left( s, a, s' \right)$ defines the instantaneous reward associated with being in the state $s \in \ccalS$, taking the action $a \in \ccalA$, and reaching the new state $s' \in \ccalS$. Interestingly, the reward function $r$ can be deterministic or stochastic. A key aspect of RL is that we do not have access to $p$, $p_1$, and $r$, often referred to as the model of the MDP.

The fundamental goal of RL is to learn the optimal sequence of actions to maximize the long-termed reward. In policy-based RL we aim to learn a probabilistic parametric policy $\pi_\theta$ that maximizes the cumulative reward {over a period of $T$ time steps}. {Let $\theta \in \mathbb{R}^p$ be a policy parameter vector, the policy} $\pi_\theta \left( a | s \right)$ is a map that defines the conditional probability of taking the action $a \in \ccalA$ being in state $s \in \ccalS$. As the model of the MDP is unknown, the policy $\pi_\theta$ is learned from trajectories sampled from the MDP. Let $t=1,...,T$ be a time index. The trajectory $\tau_T = \{ (S_t, A_t) \}_{t=1}^{T+1} \sim g (\theta)$ 
is a Markov {chain generated by} $S_1 \sim p_1$, and for any $t \geq 1$, $A_t \sim \pi_\theta( \cdot | S_t )$, $S_{t+1} \sim p( \cdot | S_t, A_t )$. The reward $R_t = r(S_t, A_t, S_{t+1})$ is a random variable that evaluates the \emph{immediate} value of the selected state-action pair at time-step $t$. However, the action $A_t$ influences $S_{t'}$ for $t'>t$, thereby impacting subsequent rewards $R_{t'}$ for $t'>t$. This illustrates that the problem is coupled across time and the cumulative reward, or return, of a trajectory $G_T( \tau_T ) = \sum_{t=1}^T R_t$ must be considered in the optimization. 

Subsequently, the policy optimization problem can be written as
\begin{equation} \label{eq::pg_cost}
    \max_{ \theta \in \mathbb{R}^p } ~\bar{G}_V(\theta) := \mathbb{E}_{ \tau_T \sim g(\theta) } [ G_T(\tau_T) ],
\end{equation}
where the decision variable appears in the distribution that leads to the sampled trajectory $\tau_T$ and the problem is non-convex in general.  
To see this clearer, consider a simple setup involving actions modeled as samples of a univariate Gaussian policy as $A_t \sim \ccalN (\mu_{S_t}, \sigma_{S_t})$, with policy parameters $\theta_{\mu} = \{ (\mu_{s}) \}_{s \in \ccalS}$ and $ \theta_{\sigma} = \{ (\sigma_{s})\}_{s \in \ccalS}$.
Clearly, the dimension of $\theta$ depends on the state space $\ccalS$ that can become infinity for continuous state space.
The dimensionality problem is usually addressed via parametric mappings that estimate the parameters of the probability distribution associated with the policy, e.g.  $\mu_{\theta_\mu}, \sigma_{\theta_\sigma}: \ccalS \mapsto \reals$ in the Gaussian case.
The problem \eqref{eq::pg_cost} evaluates the expected returns over trajectories, that are collections of transitions. For each transition, at a given state $s$, the action $a$ is sampled from the Gaussian distribution $\ccalN(\mu_s, \sigma_s)$, where the mean and the standard deviations are given by $\mu_s := \mu_{\theta_\mu}(s)$ and  $\sigma_s := \sigma_{\theta_\sigma}(s)$.
Next, we overview several common strategies for optimizing \eqref{eq::pg_cost}.

\noindent \textbf{Policy-gradient methods.} Directly optimizing \eqref{eq::pg_cost} entails using the density function $g(\theta)$ to compute an expectation over the trajectories. However, $g(\theta)$ relies on the unknown probability transition function $p$ and the initial state distribution $p_1$. As introduced previously, stochastic {approximation} methods are commonly used to circumvent this issue by approximating the expectation using samples $\tau_T \sim g (\theta)$ from the MDP. The workhorse of policy-based RL is the gradient-based algorithms that update the policy parameter vector $\theta$ through (stochastic) gradient ascent \cite{lee2020optimization}. Consequently, the update rule takes the form $\theta^{h+1} = \theta^h + \eta^h \nabla_\theta \bar{G}_V(\theta^h)$, where $h$ denotes the iteration index, and $\eta^h$ represents the step size or learning rate. 
The policy gradient theorem \cite{sutton1999policy} shows that the gradient of $\bar{G}_V(\theta)$ can be evaluated as
\begin{equation}
    \label{eq::pg_gradient}
    \nabla_\theta \bar{G}_V(\theta) = \mathbb{E}_{g \left( \theta \right)} \Big[\sum_{t=1}^T {G_T(\tau_T)} \; \nabla_\theta \text{log} \pi_\theta(A_t | S_t) \Big] .
\end{equation}
Interestingly, these schemes receive the name of policy gradient (PG) methods, as computing the gradient of $\bar{G}_V$ boils down to computing the gradient of the log-policy $\nabla_\theta \text{log}\;\pi_\theta$ with respect to (w.r.t.) the parameters $\theta$. The term $\nabla_\theta \text{log}\;\pi_\theta$ is also known as the \emph{policy score}. A celebrated PG method is Monte-Carlo PG (a.k.a.~REINFORCE) \cite{williams1992simple}, which samples trajectories from the MDP to update the parameters via \eqref{eq::pg_gradient}, using the sample return $G_t=\sum_{t'=t}^T R_{t'}$.

\vspace{.5mm}
\noindent \textbf{Actor-critic methods.} Monte-Carlo PG methods suffer from the credit assignment problem, 
i.e., determining the contribution of each action becomes challenging when the return is observed only at the end of the episode.
This often leads to high-variance issues \cite{schulman2015high}. A common strategy to mitigate the high variance in the gradient estimate while maintaining the bias unchanged involves subtracting a baseline value from the return $G_t$ \cite{greensmith2004variance}. The most common baseline is the VF, defined as $\mathbb{V}^{\pi_\theta}(S_t) = \mathbb{E}_{\pi_\theta} \left[ G_t | S_t \right]$. This results in an estimator of the advantage function \cite{schulman2015high} as $\mathbb{A}^{\pi_\theta}(S_t) = G_t - \mathbb{V}^{\pi_\theta}(S_t)$. However, computing the exact VFs associated with the policy $\pi_\theta$ is a challenge by itself, usually addressed by learning a parametric approximation $\mathbb{V}_\omega$. This leads to AC methods, that operate in two phases, by estimating first the parameters of the policy $\pi_\theta$ or actor, and then the parameters of the VF approximator $\mathbb{V}_\omega$, or critic \cite{konda1999actor}. The AC counterpart of \eqref{eq::pg_gradient} is
\begin{equation}
    \label{eq::pg_ac_gradient}
    \nabla_\theta \bar{G}_A(\theta) = \mathbb{E}_{g \left( \theta \right)} \Big[ \sum_{t=1}^T \mathbb{A}_\omega (S_t) \nabla_\theta \text{log}\pi_\theta(A_t | S_t) \Big],
\end{equation}
\noindent where $\mathbb{A}_\omega (S_t) = G_t - \mathbb{V}_\omega(S_t)$ is the advantage estimator, with $\mathbb{V}_\omega(S_t)$ fixed. Stochastic gradient updates of the actor reduces again to compute the policy scores $\nabla_\theta \text{log}\pi_\theta$. The critic parameters $\omega$ are obtained via the minimization of the mean-squared error $L(\omega)=\frac{1}{2}\sum_{t=1}^T (G_t - \mathbb{V}_\omega(S_t))^2$, typically using gradient descent methods of the form $\omega^{h+1}=\omega^{h} - \alpha^h \nabla_\omega L(\omega^h)$, where $h$ is again the iteration index and $\alpha^h$ is the gradient step.

\vspace{.5mm}
\noindent \textbf{Trust-region methods.} Interestingly, monotonic improvements of the policy can be guaranteed by constraining the policy updates within small trust-regions \cite{kakade2001natural, kakade2002approximately, schulman2015trust}. This fact motivated trust-region policy optimization (TRPO) \cite{schulman2015trust}, a method that proposes reformulating the actor update in the AC problem to find a new policy $\pi_\theta$ in a trust-region of the actual (old) policy $\pi_{\tilde{\theta}}$ used to sample from the environment. This region is defined by the KL-divergence $D_{KL}(\cdot \| \cdot)$. More formally, we denote the probability ratio between the new policy $\pi_\theta$ and the (old) policy $\pi_{\tilde{\theta}}$ as 
\begin{equation}
    q_{\theta, \tilde{\theta}} (S_t, A_t) = \frac{\pi_\theta (A_t|S_t)}{\pi_{\tilde{\theta}}(A_t|S_t)}.
\end{equation}
Then, TRPO proposes to iteratively maximize:
\begin{equation}
    \label{eq::trpo_cost}
    \bar{G}_{T}(\theta) = \mathbb{E}_{ g (\tilde{\theta}) } \left[ \sum_{t=1}^T q_{\theta, \tilde{\theta}} (S_t, A_t) \mathbb{A}_\omega (S_t) \right],
\end{equation}
subject to $\mathbb{E}\left[D_{KL}(\pi_{\tilde{\theta}}(\cdot|s) || \pi_\theta(\cdot|s))\right] \le \delta$, where $\mathbb{E}\left[ \cdot \right]$ denotes the expectation over the states, and $\delta$ is the size of the neighborhood. TRPO involves i) sampling the MDP following $\pi_{\tilde{\theta}}$ to approximate the expectation $\mathbb{E}_{ g (\tilde{\theta}) } \left[ \cdot \right]$, and ii) maximize \eqref{eq::trpo_cost} using the samples, usually via second order optimization methods \cite{schulman2015trust}. In setups where second-order methods are computationally demanding, proximal policy optimization (PPO) \cite{schulman2017proximal} proposes constraining the policy updates using a clipping mechanism to design first-order TR methods. More specifically, PPO defines a trust-region around  the actual (old) policy $\pi_{\tilde{\theta}}$ using the clipped probability ratio
\begin{equation}
    \label{eq::ratio_ppo}
    \hat{q}_{\theta, \tilde{\theta}} (S_t, A_t) = 
            \begin{cases}
              1 - \epsilon & \text{if} \; q_{\theta, \tilde{\theta}} (S_t, A_t) \leq 1 - \epsilon  \\
              1 + \epsilon & \text{if} \; q_{\theta, \tilde{\theta}} (S_t, A_t) \geq 1 + \epsilon  \\
              q_{\theta, \tilde{\theta}} (S_t, A_t) & \text{otherwise},
            \end{cases}
\end{equation}
where $\epsilon$ defines the size of the trust-region, to then maximize the following objective
\begin{align}
    \label{eq::ppo_cost}
    \bar{G}_P(\theta) =& \mathbb{E}_{g \left( \tilde{\theta} \right)} \Big[ \sum_{t=1}^T \min \big\{ q_{\theta, \tilde{\theta}} (S_t, A_t) \mathbb{A}_\omega (S_t),\nonumber \\
     &\hspace{2.4cm}\; \hat{q}_{\theta, \tilde{\theta}} (S_t, A_t) \mathbb{A}_\omega (S_t)\big\} \Big].
\end{align}
Note that the objective in \eqref{eq::ppo_cost} clips the probability ratio to $1 + \epsilon$ for positive advantages and to $1 - \epsilon$ for negative advantages.
As we will show in subsequent sections, the actor update in TR methods (TRPO and PPO) boils down to compute the policy scores $\nabla_\theta \text{log}\pi_\theta$, as in PG and AC methods. This is a fundamental aspect of policy-based algorithms, since it implies that postulating new policy models can be ``decoupled'' from the algorithmic design. For the problems investigated in this paper, this implies that the \emph{low-rank multi-linear} policy models that we postulate in subsequent sections can be applied to a wide range of policy-based RL algorithms, provided that those algorithms are modified to use the policy scores proposed in this paper. 

\subsection{Low Rank and Tensor Completion}

Since tensors are a central piece of this work, we briefly review related basic concepts and notation. In data science contexts, tensors can be understood as a multi-dimensional generalization of matrices. From a formal standpoint, a tensor $\tenbX \in \reals^{N_1 \times \hdots \times N_D}$ is a $D$-dimensional array indexed by a $D$-dimensional index $\bbi = [i_1, \hdots, i_D]$ with $i_d\in\{1,...,N_d\}$. In tensor jargon, dimensions also receive the name of modes. In contrast to matrices, there are different definitions for the rank of the tensor. In this work, we focus on the PARAFAC rank, which has been widely used to postulate parsimonious data representations. Succinctly, the PARAFAC rank is defined as the minimum number of rank-1 tensors that, when aggregated, recover the original tensor \cite{kolda2009tensor, sidiropoulos2017tensor}. To be more precise, a tensor $\tenbZ$ has rank-1 if it can be written as the outer product of $D$ vectors. This is denoted as $\tenbZ=\bbz_1 \circledcirc \hdots \circledcirc \bbz_D$ and implies that the entry $\bbi$ of rank-1 tensor is simply given by $[\tenbZ]_{\bbi}=\prod_{d=1}^D[\bbz_1]_{i_d}$. Then, $\tenbX$ has a rank $K$ if it can be decomposed as $\tenbX = \sum_{k=1}^K  \bbx_1^k \circledcirc \hdots \circledcirc \bbx_D^k$, with the PARAFAC decomposition \cite{bro1997parafac} being the procedure that, given $\tenbX$, finds the vectors $\{\bbx_d^k\}_{d=1,k=1}^{D,K}$, known as factors. The factor vectors are oftentimes collected into matrices $\bbX_d=[\bbx_d^1,...,\bbx_d^K]$. Using the entry-wise definition of the outer product, it readily follows that entry of a tensor of rank $K$ is given by
\begin{equation}
    \label{eq::parafac_model}
    [\tenbX]_\bbi = \sum_{k=1}^K \prod_{d=1}^D [\bbx_d^k]_{i_d} = \sum_{k=1}^K \prod_{d=1}^D [\bbX_d]_{{i_d}, k},    
\end{equation}
revealing that assuming that the tensor has low-rank implies modeling the tensor as a multi-linear combination of the factors. The equation above illustrates that the PARAFAC decomposition is a parsimonious representation of the original tensor. The original tensor has $\prod_{d=1}^D N_d$ elements and its size grows exponentially with the number of dimensions. On the other hand, the number of degrees of freedom under the PARAFAC representation is $\left( \sum_{d=1}^D N_d\right) K$, and more importantly, it grows additively with the number of dimensions. Notably, the only (hyper-)parameter that needs tuning in the PARAFAC model defined in \eqref{eq::parafac_model} is the rank $K$, which will emerge as one of the advantages of the policy models we will propose in the following section.

Finally, the same way matrices can be reshaped into vectors, tensors can be reshaped into smaller tensors, with the most common reshaping being writing a tensor as a very tall matrix whose columns correspond to one of the original dimensions. Interestingly, with $\odot$ denoting the Khatri-Rao (column-wise Kronecker) product, under the PARAFAC decomposition, the matricization along the $d$th mode is
\begin{equation}\mathrm{mat}_d(\tenbX)=(\bbX_1 \odot \hdots \odot \bbX_{d-1} \odot \bbX_{d+1} \odot \hdots \odot \bbX_D) \bbX_d^\Tr.
\end{equation}

%%%%%%%%%%%%%%%%%%%%%%%%%% MAIN IDEAS %%%%%%%%%%%%%%%%%%%%%%%%%%%%%%%%%
\section{Tensor Low-Rank for PG Methods}

This section contains the main contribution of the paper. It starts by introducing \emph{tensor low-rank tensor} policy models for general classes of policies, and specifically for Gaussian and softmax policies. We then use the tensor low-rank policies to design \emph{tensor low-rank} PG, AC and TR algorithms to solve policy-based RL problems. Furthermore, we show that under suitable conditions, the PG algorithm converges to a stationary point of \eqref{eq::pg_cost}.
%with the squared gradient norm at ${\cal O}(1/\sqrt{H})$, where $H$ is the number of iterations run. 

\subsection{Tensor Policy Models}

When designing parametric functional estimators, it is necessary to trade-off expressiveness and number of parameters. This is essential as we seek a meaningful approximation while taking care of the convergence time and the required sample size. To address this challenge in RL policy design, we propose using  multi-linear policy models that leverage the fact that many MDPs admit lower-dimensional representations \cite{agarwal2020flambe, yang2019harnessing, rozada2024tensor}.
As introduced in the previous section, the design of policy-based RL algorithms with alternative policy parametrizations boils down to postulating meaningful models, and characterizing the policy scores $\nabla_\theta \text{log}\;\pi_\theta$ to tune the parameters of the model. Hence, in this section we introduce the tensor low-rank policy model and derive the associated policy scores. When the policy scores are easy to evaluate, the associated policy model can be readily deployed in a wide variety of policy-based algorithms. The policy scores depend on i) the probability distribution, and ii) the functional form of the parametrization $\theta$. 
To illustrate this, consider the example of a one-dimensional continuous-action space, where actions are modeled as samples from a univariate Gaussian policy. As introduced earlier, the mean and standard deviation are typically parameterized by the mappings $\mu_{\theta_{\mu}}$ and $\sigma_{\theta_{\sigma}}$. NNs are commonly employed to implement these mappings, resulting policy gradients $\nabla_\theta \log \pi_\theta$ with closed-form that depend on the Gaussian probability density function and the NN architecture. While this approach yields tractable expressions that are straightforward to compute, it presents several challenges: (i) convergence issues, (ii) susceptibility to overfitting and overparameterization, and (iii) a failure to exploit the fact that many MDPs can be effectively represented in lower-dimensional spaces \cite{agarwal2020flambe, yang2019harnessing, rozada2024tensor, rozada2024tensorB}.
The first contribution of this paper puts forth a parsimonious multi-linear policy model that involves rearranging the set $\theta$ into a multi-dimensional tensor to then leverage \emph{low rank} via the PARAFAC decomposition.

Let $\ccalS=\ccalS_1 \times \hdots \times \ccalS_D$ denote the $D$-dimensional state space. The state space is assumed to be discrete and the cardinality of $\ccalS_d$ is denoted as $N_d$. Without loss of generality, consider a generic probabilistic policy with parameters $\{\theta_s \}_{s \in \ccalS}$ where, for simplicity, $\theta_s$ denotes the (one-dimensional) parameter of a probability distribution. We propose rearranging the parameters as a tensor $\tenbTheta \in \reals^{N_1 \times \cdots \times N_D}$. Let $\bbi_s = [i_1^s, \hdots, i_D^s]$ denote the index of the tensor associated with the state $s$, then the parameter $\theta_s$ associated with the state $s$ can be retrieved from the tensor as $\theta_s = [\tenbTheta]_{\bbi_s}$. However, the mere rearrangement of the parameters into a tensor format does not portray any benefits per se, as the 
number of parameters of the tensor grows exponentially with $D$.
In order to alleviate this problem, we postulate that the tensor $\tenbTheta$ is low rank with rank $K$. 
Furthermore, we leverage the PARAFAC decomposition in \eqref{eq::parafac_model} that parameterize the tensor through the collection of $D$ factors of rank $K$, i.e., $\Theta = \{ \bbTheta_d \}_{d=1}^D$, where $\bbTheta_d \in \reals^{N_d \times K}$.
This yields the low-rank policy optimization problem:
\begin{equation} \label{eq::tensor_parameters}
\max_{ \Theta } \bar{G}_V( \tenbTheta )~~\text{s.t.}~~[\tenbTheta]_{\bbi_s} = \sum_{k=1}^K \prod_{d=1}^D [\bbTheta_d]_{i^s_d, k}.
\end{equation}
Under the tensor low-rank framework, actions are sampled from probability distributions whose parameters are represented as a multi-linear model directly given by the factors $\Theta$. Thus, the optimization problem \eqref{eq::tensor_parameters} seeks to determine the factors that maximize the cumulative reward. Note that the original tensor $\tenbTheta$ has $\prod_{i=1}^D N_i$ parameters, and grows exponentially with the number of dimensions $D$. 
In contrast, the low-rank model parameterized by the factors $\Theta$ has $\sum_{i=1}^D N_i K$ parameters, growing only linearly with $D$.
As a remark, one can trivially extend the tensor of parameters to accommodate probability distributions with multiple parameters by considering an additional dimension in the tensor, as we will show later. 

We can now define the policy scores of the tensor policy. The policy scores are the vectorized partial derivatives of the log-policy w.r.t. all the entries of the factors $\Theta$

\begin{equation}\label{eq::score_vector_collecting_partial}
    \nabla_{\Theta} \text{log} \pi_\Theta (a | s) = \begin{bmatrix}
        \frac{\partial{\text{log} \pi_\Theta (a | s)}}{\partial{[\bbTheta_1]_{1, 1}}} \\
        \frac{\partial{\text{log} \pi_\Theta (a | s)}}{\partial{[\bbTheta_1]_{1, 2}}} \\
        \vdots \\
        \frac{\partial{\text{log} \pi_\Theta (a | s)}}{\partial{[\bbTheta_D]_{N_D, K}}} \\
    \end{bmatrix}.
\end{equation}

\noindent Then, the partial derivative of $\text{log} \pi_\Theta (a | s)$ w.r.t. the $(i, k)$th entry of $\bbTheta_d$ can be factorized using the chain rule as
\begin{align}
    \label{eq::tensor_policy_scores_b}
    \frac{\partial \text{log} \; \pi_\Theta \left( a \mid s \right)}{\partial \left[ \bbTheta_d \right]_{i, k}} 
    &= \sum_{\hat{s} \in \ccalS} \frac{\partial \text{log} \; \pi_\Theta \left( a \mid s \right)}{\partial \left[ \tenbTheta \right]_{\bbi_{\hat{s}}}} 
    \frac{\partial \left[ \tenbTheta \right]_{\bbi_{\hat{s}}}}{\partial \left[ \bbTheta_d \right]_{i, k}} \\
    \nonumber
    &= \sum_{\hat{s} \in \ccalS} \frac{\partial \text{log} \; \pi_\Theta \left( a \mid s \right)}{\partial \left[ \tenbTheta \right]_{\bbi_{\hat{s}}}} 
    \ccalI_{\bbi_{\hat{s}} = \bbi_s} 
    \frac{\partial \left[ \tenbTheta \right]_{\bbi_{\hat{s}}}}{\partial \left[ \bbTheta_d \right]_{i, k}} \\
    \nonumber
    &= \frac{\partial \text{log} \; \pi_\Theta \left( a \mid s \right)}{\partial \left[ \tenbTheta \right]_{\bbi_s}} 
    \frac{\partial \left[ \tenbTheta \right]_{\bbi_s}}{\partial \left[ \bbTheta_d \right]_{i, k}} \\
    \nonumber
    & = \frac{\partial \text{log} \; \pi_\Theta \left( a \mid s \right)}{\partial \left[ \tenbTheta \right]_{\bbi_s}}  
    \left( \prod_{\substack{j = 1 \\ j \neq d}}^D \left[ \bbTheta_j \right]_{i^{s}_j, k} \right) 
    \ccalI_{i = i^{s}_d}.
\end{align}
where $\ccalI$ is the indicator function, which takes the value $1$ when the condition of the subscript is met and $0$ otherwise. Hence, the expression in \eqref{eq::tensor_policy_scores_b} states that the derivative is only nonzero if $i=i^{s}_d$ and, as a result, only $KD$ of the entries in \eqref{eq::score_vector_collecting_partial} are nonzero. 
Interestingly, the final expression in \eqref{eq::tensor_policy_scores_b} factorizes the entry-wise policy score into two terms. The former depends on the specific probability distribution, while the latter is determined by the (tensor) model. Next, we derive the specific form of \eqref{eq::tensor_policy_scores_b} for two widely-adopted probability distributions in policy-based RL: Gaussian policies for continuous action spaces, and softmax policies for discrete action spaces.

\noindent \textbf{Gaussian policies.} As introduced previously, in setups with a continuous one-dimensional action space $\ccalA$, it is common to model actions as samples of the univariate Gaussian distribution $A_t \sim \ccalN(\mu_{S_t}, \sigma_{S_t})$. Practical implementations often consider $\sigma(S_t)$ to be constant and independent of the state to encourage exploration. Then, we focus on the estimation of $\mu_{S_t}$ and propose rearranging the set of means $\theta_\mu = \{ \mu_{s}\}_{s\in \ccalS}$ into the tensor $\tenbTheta_\mu \in \reals^{N_1 \times \hdots N_D}$, so that $\mu_{s}=[\tenbTheta_\mu]_{\bbi_s}$. We then postulate that $\tenbTheta_\mu$ has rank $K$, with factors $\Theta_\mu = \{ \bbTheta_d^\mu \}_{d=1}^D$. As a result, the entry-wise policy scores defined in \eqref{eq::tensor_policy_scores_b} for the Gaussian case are
\begin{equation}
    \label{eq::gaussian_policy_score}
    \frac{\partial{\text{log} \; \pi_{\Theta_\mu} (a | s)}}{\partial{[\bbTheta_d^\mu]_{i, k}}} = \frac{a - [\tenbTheta_\mu]_{\bbi_s}}{\sigma^2} \left( \prod_{j = 1 \neq d}^D [\bbTheta_j^\mu]_{i^s_d, k}  \right) \ccalI_{i = i^s_d}. 
\end{equation}
\textbf{Softmax policies.} Consider now a one-dimensional discrete-action space $\ccalA=\{ 1, 2, \hdots, C\}$. Actions can be modeled as samples of the probability distribution defined by the softmax function as $A_t \sim \ccalZ(\bbz_{S_t}, \beta)$, where $\bbz_{S_t}$ are a set of logits associated with the state $S_t$, and $\beta$ is a temperature parameter to promote exploratory actions in the initial phases of learning. The softmax function $\ccalZ: \reals^C \mapsto [0,1]^C$ is a vector function that maps $\bbz=[z_1, \hdots, z_C]^\Tr$ into a probability distribution, with its i-th entry being defined as  $\ccalZ(\bbz, \beta)_i = \frac{e^{\beta z_i}}{\sum_{j=1}^C e^{\beta z_j}}$. We can see the logits $\bbz_{s}$ as a fiber of a tensor $\tenbTheta_\bbz \in \reals^{N_1 \times \hdots N_D \times C}$. In this case, $\tenbTheta_\bbz$ has an extra dimension which is indexed with the actions. Then, with a slightly abuse of notation we define  $\bbi_s = [i^s_1, \hdots, i^s_D]$ and $\bbi_{s,a} = [i^{s,a}_1, \hdots, i^{s,a}_{D+1}]=[\bbi_s ; a]$ to index the fiber $\bbz_s = [\tenbTheta]_{\bbi_s}$ and the entry $z_{s, a} = [\tenbTheta_\bbz]_{\bbi_{s, a}}$ respectively. As before, we postulate that $\tenbTheta_\bbz$ is a $K$-rank tensor, and under the PARAFAC model the factors are $\Theta_\bbz = \{ \bbTheta_d^\bbz \}_{d=1}^{D + 1}$. In the softmax case the entry-wise policy-scores in \eqref{eq::tensor_policy_scores_b} are somewhat more complicated, since we have to take derivatives w.r.t. the fiber of $\tenbTheta_\bbz$ corresponding to the index $\bbi_s$, leading to
\begin{align}
    \label{eq::softmax_policy_score}
    \nonumber
    &\frac{\partial{\text{log} \; \pi_{\Theta_\bbz} (a | s)}}{\partial{[\bbTheta_d^\bbz]_{i, k}}} =  \\
    &\sum_{b = 1}^C  \left( \ccalI_{a = b} - \ccalZ([\tenbTheta_\bbz]_{\bbi_s}, \beta)_b\right) \! \left(\prod_{j = 1 \neq d}^{D+1} [\bbTheta_j^\bbz]_{i^{s, b}_j, k} \right) \! \ccalI_{i=i^{s, b}_d}, 
\end{align}
where, again, the partial derivatives of $\text{log} \pi_{\Theta_\bbz} (a | s)$ w.r.t. other fibers $\bbi \neq \bbi_s$ are zero.

\noindent \textbf{Sampling the MDP.} As explained in the previous section, since the dynamics of the MDP are unknown, RL learns by interacting with the environment. A common mechanism underlying various algorithms discussed subsequently involves sampling trajectories from the MDP. Specifically, we sample a set of trajectories $\{\tau_u\}_{u=1}^U \sim g(\Theta_\mu)$ following the Gaussian policy $\pi_{\Theta_\mu}$ for continuous actions or $\{\tau_u\}_{u=1}^U \sim g(\Theta_\bbz)$ following the softmax policy $\pi_{\Theta_\bbz}$ for discrete actions. The sampling process for Gaussian policies is outlined in Algorithm \ref{alg::gsampling}, while the sampling mechanism for softmax policies is depicted in Algorithm \ref{alg::ssampling}.

\begin{algorithm}[!htbp]
\flushleft
\caption{Tensor Low-Rank Gaussian-Policy Sampling}
\label{alg::gsampling}
\begin{algorithmic}
    \Require Policy and factors $\Theta_\mu$; fixed standard deviation $\sigma$; number of trajectories per sample $U$; and number of time-steps per trajectory $T$.
    \State
    \Function{GaussianSampling}{$\Theta_\mu$, $U$, $T$, $\sigma$}
        \For{$u=1, \hdots, U$}
            \State{Observe $S_1^u$}
            \For{$t=1, ..., T$}
                \State{$\mu_{S_t^u} \gets \sum_{k=1}^K \prod_{d=1}^D [\bbTheta_d^\mu]_{i^{S_t^u}_d, k}$}
                \State{$A_t^u \sim \ccalN(\mu_{S_t^u}, \sigma)$}
                \State{Take $A_t^u$, observe $S_{t+1}^u$, and $R_t^u$}
                \State{$S_t^u \gets S_{t+1}^u$}
            \EndFor
            \State $\tau_u = \{(S_t^u, A_t^u)\}_{t=1}^T$
        \EndFor
        \State \Return{$\{\tau_u\}_{u=1}^U$}
        \State
    \EndFunction
\end{algorithmic}
\end{algorithm}

\begin{algorithm}[!htbp]
\flushleft
\caption{Tensor Low-Rank Softmax-Policy Sampling}
\label{alg::ssampling}
\begin{algorithmic}
    \Require Policy and factors $\Theta_\bbz$; temperature coefficient $\beta$; number of trajectories per sample $U$; and number of time-steps per trajectory $T$.
    \State
    \Function{SoftmaxSampling}{$\Theta_\bbz$, $U$, $T$, $\beta$}
        \For{$u=1, \hdots, U$}
            \State{Observe $S_1^u$}
            \For{$t=1, ..., T$}
                \State{$[\bbz_{S_t^u}]_a \gets \sum_{k=1}^K \prod_{d=1}^{D + 1} [\bbTheta_d^\bbz]_{i^{S_t^u, A_t^u}_d, k} \;  \forall a \in \ccalA $}
                \State{$A_t^u \sim \ccalZ(\bbz_{S_t^u}, \beta)$}
                \State{Take $A_t^u$, observe $S_{t+1}^u$, and $R_t^u$}
                \State{$S_t^u \gets S_{t+1}^u$}
            \EndFor
            \State $\tau_u = \{(S_t^u, A_t^u)\}_{t=1}^T$
        \EndFor
        \State \Return{$\{\tau_u\}_{u=1}^U$}
        \State
    \EndFunction
\end{algorithmic}
\end{algorithm}

\noindent \textbf{Multi-dimensional action spaces.} The framework presented in this paper can accommodate action spaces with multiple dimensions, denoted as $\ccalA = \ccalA_1 \times \hdots \times \ccalA_P$. Actions within this framework are represented as $P$-dimensional random vectors $a=[a_1, \hdots, a_P]^\Tr$, sampled from a multivariate policy distribution. Traditionally, RL algorithms assume independence among action dimensions. For Gaussian policies this leads to a multivariate Gaussian distribution with an identity covariance matrix, which in the tensor framework can be modeled by introducing a new dimension in the tensor of means $\tenbTheta_\mu$, where $\bbmu_s = [\tenbTheta_\mu]_{\bbi_s}$ represents a vector of means. Similarly, for policies based on the softmax distribution, introducing a set of logits per action dimension necessitates a new dimension in the tensor of logits $\tenbTheta_\bbZ$, where $\bbZ_s = [\tenbTheta_\bbZ]_{\bbi_s}$ represents a matrix of logits. Due to the assumption of independent actions, the joint policy can be factorized as $\pi_\Theta (a | s) = \prod_{p=1}^P \pi_\Theta (a_p | s)$, simplifying the policy scores computation to $\nabla_\Theta \log \pi_\Theta (a | s) = \sum_{p=1}^P \nabla_\Theta \log \pi_\Theta (a_p | s)$. Hence, it is straightforward to extend the derivation of the policy scores presented in this section to the multi-dimensional case. For the sake of clarity, subsequent derivations in the paper focus on the unidimensional action space.

\subsection{Tensor Low-Rank Actor-Critic Methods}
\label{sec::TLRAC}

Tensor low-rank policies can be readily integrated into PG and AC methods, since both are first-order methods. In this section, we introduce a \emph{tensor low-rank} PG algorithm, and a \emph{tensor low-rank} AC algorithm.

\noindent \textbf{Policy Gradients.} Recall that in PG we aim to maximize  the function $\bar{G}_V$ defined in \eqref{eq::pg_cost} via stochastic gradient ascent. Again, we use the Gaussian policy $\pi_{\Theta_\mu}$ for continuous action spaces, and the softmax policy $\pi_{\Theta_\bbz}$ for discrete action spaces. For the Gaussian case, the first step involves sampling a set of trajectories from the MDP $\{\tau_u\}_{u=1}^U \sim g(\Theta_\mu)$ as indicated in Algorithm \ref{alg::gsampling}. Then, we compute the sample returns $G_t^u = \sum_{t'=t}^T R_{t'}^u$, and the policy scores $\nabla_{\Theta_\mu} \text{log} \pi_{\Theta_\mu} (A_t^u | S_t^u)$ using \eqref{eq::gaussian_policy_score}. Finally, the factors in $\Theta_\mu$ are updated leveraging the stochastic approximation of the gradient defined in \eqref{eq::pg_gradient}
\begin{equation}
    \label{eq::gaussian_stochastic_gradient}
    \tilde{\nabla}_{\Theta_\mu} \bar{G}_V(\Theta_\mu) = \frac{1}{U} \sum_{u=1}^U \sum_{t=1}^T G_t^u \nabla_{\Theta_\mu} \text{log} \pi_{\Theta_\mu} (A_t^u | S_t^u).
\end{equation}

\noindent Analogously, in the softmax case  we sample from the MDP according to $\pi_{\bbz}$ following Algorithm \ref{alg::ssampling}, we compute the policy scores $\nabla_{\Theta_\bbz} \text{log} \pi_{\Theta_\bbz} (A_t^u | S_t^u)$ as per \eqref{eq::softmax_policy_score}, and we obtain the stochastic gradient
\begin{equation}
    \label{eq::softmax_stochastic_gradient}
    \tilde{\nabla}_{\Theta_\bbz} \bar{G}_V(\Theta_\bbz) = \frac{1}{U} \sum_{u=1}^U \sum_{t=1}^T G_t^u \nabla_{\Theta_\bbz} \text{log} \pi_{\Theta_\bbz} (A_t^u | S_t^u).
\end{equation}

The result is a tensor low-rank policy gradient (TLRPG) algorithm that is described in Algorithm \ref{alg::pg}. The Gaussian case involves using the factors $\Theta=\Theta_\mu$,  the learning rate $\eta$, and using the Gaussian sampling defined in Algorithm \ref{alg::gsampling} with $b=0$ and $\text{params}_b=\sigma$. Similarly, in the softmax case $\Theta=\Theta_\bbz$, and we use the softmax sampling defined in Algorithm \ref{alg::ssampling} with $b=1$, and $\text{params}_b=\beta$.

\begin{algorithm}[!htbp]
\flushleft
\caption{Tensor Low-Rank Policy Gradient (TLRPG)}
\label{alg::pg}
\begin{algorithmic}[1]
    \Require Initial policy and factors $\Theta^1$; variable $b$ that triggers Algorithm \ref{alg::gsampling} if $b=0$, and Algorithm \ref{alg::ssampling} if $b=1$; the parameters of the sampling function $\text{params}_b$; learning rates $\{\eta^{h} \}_{h=1}^H$; number of trajectories per sample $U$; maximum number of iterations $H$; and number of time-steps per trajectory $T$.
    \For{$h=1, \hdots,  H$}
        \State $\{\tau_u^h\}_{u=1}^U \gets \textsc{Sampling}(\Theta^{h}, U, T, b, \text{params}_b)$ 
        \State $\Theta^{h+1} \gets \Theta^h + \eta^h \tilde{\nabla}_{\Theta} \bar{G}_V (\Theta^h)$ \label{alg:sgd_line}
    \EndFor
\end{algorithmic}
\end{algorithm}

\noindent \textbf{Actor Critic.} Tensor low-rank policies can be generalized to AC setups. The first steps are quite similar. Namely, following Algorithms \ref{alg::gsampling} and \ref{alg::ssampling}, the actor $\pi_{\Theta_\mu}$ or $\pi_{\Theta_\bbz}$ is used to obtain a set of trajectories from the MDP $\{\tau_u\}_{u=1}^U$, and the samples are used to compute the policy scores according to \eqref{eq::gaussian_policy_score} or \eqref{eq::softmax_policy_score}. However, AC methods subtract the approximated VF from the sample return to estimate the advantage function $\mathbb{A}_\omega(S_t^u) = G_t^u - \mathbb{V}_\omega(S_t^u)$, resulting in algorithms with lower variance. The price to pay is that we need to learn the parameters $\omega$ of the VF approximator, or critic, minimizing the quadratic cost $L (\omega)=\frac{1}{2}\sum_{u=1}^U \sum_{t=1}^T (G_t^u - \mathbb{V}_\omega(S_t^u))^2$. Following the spirit of this paper, we propose a multi-linear model to approximate the VFs as well. More specifically, we collect the VFs in the tensor $\tenbV \in \reals^{\ccalS_1 \times \hdots \times \ccalS_D}$ to then postulate a $K'$-rank PARAFAC model with factors $\omega = \{\bbV_d \in \reals^{N_d \times K'}\}_{d=1}^D$. Then, the VF of $s$ is $\mathbb{V}_{\omega}(s) = [\tenbV]_{\bbi_s}$, which under the PARAFAC model are computed as $\sum_{k=1}^{K'}\prod_{d=1}^D [\bbV_d]_{i^s_d, k}$. In AC methods, the updates of the parameters of the actor and the critic are alternated. Firstly, we fix the critic and update the actor. In the Gaussian case this entails updating the factors $\Theta_\mu$ of the actor using the stochastic gradient
\begin{equation}
    \label{eq::gaussian_stochastic_gradient_ac}
    \tilde{\nabla}_{\Theta_\mu} \bar{G}_A(\Theta_\mu) = \frac{1}{U} \sum_{u=1}^U \sum_{t=1}^T \mathbb{A}_{\omega}(S_t^u) \nabla_{\Theta{\mu}} \text{log} \pi_{\Theta_\mu} (A_t^u | S_t^u),
\end{equation}
where $\mathbb{A}_{\omega}(S_t^u) = G_j^u - [\tenbV]_{\bbi_{S_t^u}}$. In the softmax case, we update the factors $\Theta_\bbz$ of the actor using the stochastic gradient
\begin{equation}
    \label{eq::softmax_stochastic_gradient_ac}
    \tilde{\nabla}_{\Theta_\bbz} \bar{G}_A(\Theta_\bbz) = \frac{1}{U} \sum_{u=1}^U \sum_{t=1}^T \mathbb{A}_{\omega}(S_t^u) \nabla_{\Theta{\bbz}} \text{log} \pi_{\Theta_\bbz} (A_t^u | S_t^u).
\end{equation}

\noindent Secondly, we update the critic by minimizing $L(\omega)$ via gradient descent. In order to do so we need to compute the  gradient $\nabla_{\omega} L (\omega)$, which entry-wise is
\begin{equation}
    \frac{\partial{L (\omega)}}{\partial{[\bbV_d]_{i, k}}}\! = \!\!\sum_{u=1}^U \sum_{t=1}^T \!\left( G_t^u \!-\! [\tenbV]_{\bbi_{S_t^u}}\right) \!\left( \prod_{j = 1 \neq d}^D\!\! [\bbV_j]_{i^{S_t^u}_j, k} \!\right) \ccalI_{i = i^{S_t^u}_d}.
\end{equation}
\noindent This leads to a tensor low-rank actor-critic (TLRAC) algorithm which is fully depicted in Algorithm \ref{alg::ac}. In the Gaussian case $\Theta=\Theta_\mu$, and we use the sampling mechanism defined in Algorithm \ref{alg::gsampling} with $b=0$, and $\text{params}_b=\sigma$, while in the softmax case we set $\Theta=\Theta_\bbz$ and  the learning rate for the critic $\alpha$, and we use the sampling in Algorithm \ref{alg::ssampling} with $b=1$, and $\text{params}_b=\beta$.

\begin{algorithm}[!htbp]
\flushleft
\caption{Tensor Low-Rank Actor Critic (TLRAC)}
\label{alg::ac}
\begin{algorithmic}
    \Require Initial policy and factors $\Theta^1$; initial VFs and factors $\omega^1$; variable $b$ that triggers Algorithm \ref{alg::gsampling} if $b=0$, and Algorithm \ref{alg::ssampling} if $b=1$; the parameters of the sampling function $\text{params}_b$ ; actor learning rates $\{\eta^{h} \}_{h=1}^H$; critic learning rates $\{\alpha^{h} \}_{h=1}^H$; number of trajectories per sample $U$; maximum number of iterations $H$; and number of time-steps per trajectory $T$.
    \For{$h=1, \hdots,  H$}
        \State $\{\tau_u^h\}_{u=1}^U \gets \textsc{Sampling}(\Theta^{h}, U, T, b, \text{params}_b)$
        \State $\Theta^{h+1} \gets \Theta^h + \eta^h \tilde{\nabla}_{\Theta} \bar{G}_A(\Theta^h)$ \hfill\Comment{Actor update}
        \State $\omega^{h+1} \gets \omega^h + \alpha^h \nabla_{\omega} L(\omega^h)$ \hfill\Comment{Critic update}
    \EndFor
\end{algorithmic}
\end{algorithm}

\subsection{Tensor Low-Rank Trust-Region Methods}

We claimed in previous sections that estimating the parameters of a policy via TR methods can be reduced to characterizing the policy scores too. In this section, we show first how TRPO and PPO can be expressed in terms of the policy scores, to then leverage our \emph{tensor low-rank} policy models in both setups.

\noindent \textbf{TRPO.} We begin by recalling that in TRPO we aim to maximize $\bar{G}_T$ as defined in \eqref{eq::trpo_cost}, subject to the constraint that the KL-divergence between $\pi_\Theta$ and the (old) sampling policy $\pi_{\tilde{\Theta}}$ is less than $\delta$. More formally, we solve the optimization
\begin{equation}
\label{eq::trpo}
\begin{aligned}
& \underset{\Theta}{\text{maximize}} \;\; \bar{G}_{T}(\Theta) = && \mathbb{E}_{ g (\tilde{\Theta}) } \left[ \sum_{t=1}^T q_{\Theta, \tilde{\Theta}} (S_t, A_t) \mathbb{A}_\omega (S_t) \right] \\
& \text{subject to} && \mathbb{E}\left[D_{KL}(\pi_{\tilde{\Theta}}(\cdot|s) || \pi_\Theta(\cdot|s))\right] \le \delta
\end{aligned}
\end{equation}
\noindent via sampling-based techniques. To render \eqref{eq::trpo} tractable, we resort to Taylor expansions of both the objective function and the KL constraint \cite{kakade2001natural, schulman2015trust}. Leveraging a first-order approximation for the objective $\bar{G}_T$, and a second-order approximation for the KL constraint, the following quadratic optimization is reached:

\begin{equation}
\label{eq::trpo_relaxed}
\begin{aligned}
    & \underset{\Theta}{\text{maximize}} && \tilde{\bbg}^\Tr(\Theta - \tilde{\Theta}) \\
    & \text{subject to} && (\Theta - \tilde{\Theta})^\Tr\tilde{\bbH} (\Theta - \tilde{\Theta}) \le \delta,
\end{aligned}
\end{equation}

\noindent where $\tilde{\bbg} = \nabla_\Theta \bar{G}_T(\Theta) \left. \right|_{\Theta = \tilde{\Theta}}$ is the gradient of the cost function w.r.t. the set of parameters $\Theta$ evaluated at $\tilde{\Theta}$, and $\tilde{\bbH}$ is the Hessian matrix of the KL constraint evaluated at $\tilde{\Theta}$, with entries $[\tilde{\bbH}]_{i, j} = \frac{\partial}{\partial {\Theta}_i} \frac{\partial} {\partial {\Theta}_j} \mathbb{E} \left[ D_{KL}(\pi_{\tilde{\Theta}}(\cdot|s) || \pi_\Theta(\cdot|s)) \right]  \left. \right|_{\Theta = \tilde{\Theta}}$. Notably, the gradient of the KL constraint evaluated at $\Theta=\tilde{\Theta}$ is zero and the Hessian evaluated at $\Theta=\tilde{\Theta}$ is the Fisher information matrix (FIM) \cite{amari2012differential}. The FIM definition allows us to reformulate $\tilde{\bbH}$ in terms of the outer product of the policy scores as $\tilde{\bbH} = \mathbb{E} \left[ \nabla_{\Theta} \text{log} \; \pi_{{\Theta}}(a|s) \nabla_{\Theta} \text{log} \; \pi_{{\Theta}}(a|s)^\Tr \right] \left. \right|_{\Theta = \tilde{\Theta}}$. Trivially, the gradient  $\tilde{\bbg}$ can also be reformulated in terms of the policy scores $\nabla_{\Theta} \text{log} \; \pi_\Theta$ as $\tilde{\bbg} = \mathbb{E}_{g(\tilde{\Theta})} \left[ \sum_{t=1}^T q_{\Theta, \tilde{\Theta}} (S_t, A_t) \nabla_{\Theta} \text{log} \; \pi_{{\Theta}}(A_t|S_t) \mathbb{A}_\omega (S_t) \right]\Big|_{\Theta = \tilde{\Theta}}$. Recall that in TRPO we approximate the expectations in $\tilde{\bbg}$ and $\tilde{\bbH}$  using sampled trajectories from the MDP $\{\tau_u\}_{u=1}^U \sim g(\tilde{\Theta})$ following $\pi_{\tilde{\Theta}}$. Thus we end up having the following stochastic approximation
\begin{align}
    \label{eq::trpo_stochastic_approximation_g}
    \tilde{\bbg} \! &\approx \! \frac{1}{U} \! \sum_{u=1}^U \sum_{t=1}^T \! q_{\Theta, \tilde{\Theta}} (S_t^u, A_t^u) \nabla_{\Theta} \text{log} \; \pi_{{\Theta}}(A_t^u|S_t^u) \mathbb{A}_\omega (S_t^u) \Bigg|_{\Theta = \tilde{\Theta}} \! , \\
    \label{eq::trpo_stochastic_approximation_h}
    \tilde{\bbH} \! &\approx \! \frac{1}{U} \frac{1}{T} \! \sum_{u=1}^U \sum_{t=1}^T  \! \nabla_{\Theta} \text{log} \;\pi_{{\Theta}}(A_t^u|S_t^u) \! \nabla_{\Theta} \text{log} \;\pi_{{\Theta}}(A_t^u|S_t^u)^\Tr \Big|_{\Theta = \tilde{\Theta}} \! .
\end{align}
As previously pointed out, equations \eqref{eq::trpo_stochastic_approximation_g} and \eqref{eq::trpo_stochastic_approximation_h} show that the policy model  impacts the computation of $\tilde{\bbg}$ and $\tilde{\bbH}$ solely through the policy scores.

Now, everything is set up to introduce our \emph{tensor low-rank} policy model in TRPO. First, we introduce the initial Gaussian policy $\pi_{\tilde{\Theta}_{\mu}}$ for continuous action setups, and the initial softmax policy $\pi_{\tilde{\Theta}_{\bbz}}$ for discrete action spaces. Then, the actor step involves sampling a set of trajectories from the MDP $\{\tau_u\}_{u=1}^U$ following $\pi_{\tilde{\Theta}_{\mu}}$ or $\pi_{\tilde{\Theta}_{\bbz}}$ as per Algorithms \ref{alg::gsampling} and \ref{alg::ssampling}. We compute the sample returns $G_t^u = \sum_{t'=t}^T R_{t'}^u$, and the VFs using the critic $\mathbb{V}_\omega(S_t^u)$. Following \eqref{eq::trpo_stochastic_approximation_g} and \eqref{eq::trpo_stochastic_approximation_h}, Gaussian policies lead to the stochastic gradient $\tilde{\bbg}_\mu$ and the stochastic Hessian $\tilde{\bbH}_\mu$ using the Gaussian policy scores in \eqref{eq::gaussian_policy_score}, and similarly, softmax policies lead to $\tilde{\bbg}_\bbz$ and $\tilde{\bbH}_\bbz$ using the softmax policy scores in \eqref{eq::softmax_policy_score}.  Lastly, we solve \eqref{eq::trpo_relaxed} via conjugate gradient descent methods \cite{schulman2015trust} to obtain $\Theta_\mu$ in the Gaussian case and $\Theta_\bbz$ in the softmax case. After completing the actor step, the critic is updated in the exact same way as in Sec. \ref{sec::TLRAC}. The resulting trust-region tensor low-rank policy optimization (TRTLRPO) algorithm is depicted in Algorithm \ref{alg::trpo}. Again, the Gaussian case involves setting $\Theta=\Theta_\mu$, and sampling according to Algorithm \ref{alg::gsampling} with $b=0$ and $\text{params}_b=\sigma$. The softmax case involves setting $\Theta=\Theta_\bbz$, and sampling according to Algorithm \ref{alg::ssampling} with $b=1$ and $\text{params}_b=\beta$.

\begin{algorithm}[!htbp]
\flushleft
\caption{Trust-Region Tensor Low-Rank Policy Optimization (TRTLRPO)}
\label{alg::trpo}
\begin{algorithmic}
    \Require Initial policy and factors $\Theta^1$; initial VFs and factors $\omega^1$; variable $b$ that triggers Algorithm \ref{alg::gsampling} if $b=0$, and Algorithm \ref{alg::ssampling} if $b=1$; the parameters of the sampling function $\text{params}_b$ ; critic learning rates $\{\alpha^{h} \}_{h=1}^H$; number of trajectories per sample $U$; maximum number of iterations $H$; and number of time-steps per trajectory $T$.
    \For{$h=1, \hdots,  H$}
        \State $\{\tau_u^h\}_{u=1}^U \gets \textsc{Sampling}(\Theta^{h}, U, T, b, \text{params}_b)$
        \State $\Theta^{h+1} \gets$ Solve \eqref{eq::trpo_relaxed} using $\tilde{\bbg}$ and $\tilde{\bbH}$ \hfill\Comment{Actor update}
        \State $\omega^{h+1} \gets \omega^h + \alpha^h \nabla_{\omega} L(\omega^h)$ \hfill\Comment{Critic update}
    \EndFor
\end{algorithmic}
\end{algorithm}

\noindent \textbf{PPO.} In PPO we seek to maximize the unconstrainted function $\bar{G}_P$ defined in \eqref{eq::ppo_cost}, which enforces a trust-region via the clipped probability ratio $q_{\Theta, \tilde{\Theta}}$. To solve this problem, we resort to first order methods, hence we need to compute the gradient $\nabla_\Theta \bar{G}_P(\Theta)$. Observing the definition of $q_{\Theta, \tilde{\Theta}}$ in \eqref{eq::ratio_ppo}, the term inside the expectation of  \eqref{eq::ppo_cost} can be rearranged as
\begin{align}
    \label{eq::ppo_inside_term}
    &\min \{ q_{\Theta, \tilde{\Theta}} (S_t, A_t) \mathbb{A}_\omega (S_t), \; \hat{q}_{\Theta, \tilde{\Theta}} (S_t, A_t) \mathbb{A}_\omega (S_t)\} = \\
    \nonumber
            &\begin{cases}
              (1 - \epsilon)  \mathbb{A}_\omega (S_t) \!\! & \text{if} \; q_{\Theta, \!\tilde{\Theta}} (S_t, A_t) \! \leq \!1 \!-\! \epsilon \; \text{and} \;\mathbb{A}_\omega (S_t) \!\le \!0 \\
              (1 + \epsilon)  \mathbb{A}_\omega (S_t) \!\!  & \text{if} \; q_{\Theta,\! \tilde{\Theta}} (S_t, A_t) \! \geq \!1 \!+\! \epsilon \; \text{and} \;\mathbb{A}_\omega (S_t) \!\ge\! 0\\
              q_{\Theta, \tilde{\Theta}} (S_t, A_t) \mathbb{A}_\omega (S_t) \!\! & \text{otherwise}.
            \end{cases}
\end{align}

\noindent Then, it is easy to see that, conditioned to the value of $\mathbb{A}_\omega (S_t)$, the gradient of $\bar{G}_P$ w.r.t. $\Theta$ outside the clipping range defined by $\epsilon$ is $0$. Let $\ccalI_{\Theta, \tilde{\Theta}}^\omega (S_t, A_t)$ be an indicator function that takes the value $1$ when the third clause of \eqref{eq::ppo_inside_term} is active, and $0$ otherwise. The gradient of \eqref{eq::ppo_cost} is defined as
\begin{align}
    \nonumber
    &\nabla_\Theta \bar{G}_P(\Theta) = \mathbb{E}_{g \left( \tilde{\Theta} \right)} \Big[ \sum_{t=1}^T \ccalI_{\Theta, \tilde{\Theta}}^\omega (S_t, A_t)  q_{\Theta, \tilde{\Theta}}(S_t, A_t) \\
    \label{eq::ppo_gradient}
    &\hspace{3.2cm}\nabla_\Theta \text{log} \pi_\Theta(A_t | S_t) \mathbb{A}_\omega (S_t) \Big],
\end{align}
\noindent where again, computing the gradient of $\bar{G}_P$ boils down to estimating the policy scores $\nabla_\Theta \text{log} \pi_\Theta(A_t | S_t)$.

Leveraging \emph{tensor low-rank} policy models in PPO involves using the Gaussian policy $\pi_{\tilde{\Theta}_{\mu}}$ or the softmax policy $\pi_{\tilde{\Theta}_{\bbz}}$ to first sample trajectories $\{\tau_u\}_{u=1}^U$ from the MDP as depicted in Algorithms \ref{alg::gsampling} and \ref{alg::ssampling}. Next, we compute $G_t^u$, $\mathbb{V}_\omega(S_t^u)$, and the policy scores as per $\eqref{eq::gaussian_policy_score}$ or \eqref{eq::softmax_policy_score}. Then, we update the actor using the stochastic gradients for the Gaussian case
\begin{align}
       \nonumber
    &\nabla_{\Theta_\mu} \bar{G}_P(\Theta_\mu) = \mathbb{E}_{g \left( \tilde{\Theta}_{\mu} \right)} \Big[ \sum_{t=1}^T \ccalI_{\Theta, \tilde{\Theta}}^\omega (S_t, A_t)  q_{\Theta_\mu, \tilde{\Theta}_{\mu}}(S_t, A_t)\\
 \label{eq::ppo_gradient_gaussian}
    & \hspace{3.2cm}\nabla_{\Theta_\mu} \text{log} \pi_{\Theta_\mu}(A_t | S_t) \mathbb{A}_\omega (S_t) \Big],
\end{align}
\noindent and for the softmax case
\begin{align}
\nonumber
    &\nabla_{\Theta_\bbz} \bar{G}_P(\Theta_\bbz) = \mathbb{E}_{g \left( \tilde{\Theta}_{\bbz} \right)} \Big[ \sum_{t=1}^T \ccalI_{\Theta, \tilde{\Theta}}^\omega (S_t, A_t)  q_{\Theta_\bbz, \tilde{\Theta}_{\bbz}}(S_t, A_t)\\
    \label{eq::ppo_gradient_softmax}
    &\hspace{3.2cm} \nabla_{\Theta_\bbz} \text{log} \pi_{\Theta_\bbz}(A_t | S_t) \mathbb{A}_\omega (S_t) \Big].
\end{align}
\noindent Again, the critic is updated after completing the actor step. The resulting algorithm is called proximal tensor low-rank policy optimization (PTLRPO), and is outlined in Algorithm \ref{alg::ppo}. In the Gaussian case we set $\Theta=\Theta_\mu$, and we sample according to Algorithm \ref{alg::gsampling} with $b=0$ and $\text{params}_b=\sigma$. In the softmax case, we set $\Theta=\Theta_\bbz$, and we sample as in Algorithm \ref{alg::ssampling} with $b=1$ and $\text{params}_b=\beta$.

\begin{algorithm}[!htbp]
\flushleft
\caption{Proximal Tensor Low-Rank Policy Optimization (PTLRPO)}
\label{alg::ppo}
\begin{algorithmic}
    \Require Initial policy and factors $\Theta^1$; initial VFs and factors $\omega^1$; variable $b$ that triggers Algorithm \ref{alg::gsampling} if $b=0$, and Algorithm \ref{alg::ssampling} if $b=1$; the parameters of the sampling function $\text{params}_b$ ; actor learning rates $\{\eta^{h} \}_{h=1}^H$; critic learning rates $\{\alpha^{h} \}_{h=1}^H$; number of trajectories per sample $U$; maximum number of iterations $H$; and number of time-steps per trajectory $T$.
    \For{$h=1, \hdots,  H$}
        \State $\{\tau_u^h\}_{u=1}^U \gets \textsc{Sampling}(\Theta^{h}, U, T, b, \text{params}_b)$
        \State $\Theta^{h+1} \gets \Theta^h + \eta^h \tilde{\nabla}_{\Theta} \bar{G}_P(\Theta^h)$ \hfill\Comment{Actor update}
        \State $\omega^{h+1} \gets \omega^h + \alpha^h \nabla_{\omega} L(\omega^h)$ \hfill\Comment{Critic update}
    \EndFor
\end{algorithmic}
\end{algorithm}

\subsection{Convergence Analysis} \label{subsec:analysis}
We focus on the tensor low rank PG (TLRPG) method in Algorithm~\ref{alg::pg} and analyze its convergence towards a stationary solution of \eqref{eq::pg_cost}. To make the analysis tractable, we consider a slight modification to the algorithm which composes of a \emph{projection} step on top of the stochastic policy gradient update. Let $\ccalO \subset  \mathbb{R}^{ N_1 \times K } \times \hdots \times \mathbb{R}^{ N_D \times K }$ be a closed and convex set.
% satisfying ${\rm diam}( \Theta ) \leq B$. 
In lieu of Line~\ref{alg:sgd_line} of Algorithm~\ref{alg::pg}, the update for $\Theta$ is
\begin{equation} \label{eq::proj_tlrpg}
    \Theta^{h+1} = {\cal P}_{\ccalO} ( \Theta + \eta^h \tilde{\nabla}_{\Theta} \bar{G}_V (\Theta^h) ),
\end{equation}
where ${\cal P}_{\ccalO}(\cdot)$ denotes the Euclidean projection onto $\ccalO$.
For the sake of clarity, we define the shorthand notation $\nabla_{ \bbTheta_i } \log p_{g(\Theta)} (\tau_T) := \sum_{t=1}^T \nabla_{ \bbTheta_i } \log \pi_{ \Theta } ( A_t | S_t )$ and consider: 
\begin{assumption} \label{assumption:Bd}
The reward function is uniformly bounded, i.e., there exists a constant ${\rm R}$ such that 
\[
|r( s, a, s' )| \leq {\rm R},~\forall~s, s' \in {\cal S}, a \in {\cal A}.
\]
\end{assumption}
\begin{assumption} \label{assumption:GradPolicy}
For any $\Theta, \Theta' \in \ccalO$, $s \in {\cal S}$, $a \in {\cal A}$, $i = 1,...,D$, there exists a constant $L_0$ such that 
\[
\| \nabla_{ \bbTheta_i } \log \pi_\Theta ( a | s ) - \nabla_{ \bbTheta_i } \log \pi_{\Theta'} ( a | s ) \| \leq L_0 \| \Theta - \Theta' \|.
\]
\end{assumption}
\begin{assumption} \label{assumption:Distribution}
For any $\Theta, \Theta', \Theta'' \in \Theta$, $i = 1,...,D$, there exists a constant $L_1$ such that
\[
\begin{split} 
& \Big\| \mathbb{E}_{ \tau_T \sim g(\Theta) } \left[ {G}_T ( \tau_T ) \nabla_{ \bbTheta_i } \log p_{g(\Theta'')} ( \tau_T )  \right] \\
& - \mathbb{E}_{ \tau_T' \sim g (\Theta') } \left[ {G}_T ( \tau_T' ) \nabla_{ \bbTheta_i } \log p_{ g (\Theta'')} ( \tau_T' )  \right] \Big\| \leq L_1 \| \Theta - \Theta' \|.
\end{split}
\]
\end{assumption}
\begin{assumption} \label{assumption:StocGrad}
    For any $\Theta$, there exists a constant ${\tt G}$ such that 
    \[
    \mathbb{E}_{ \tau_T \sim g(\Theta) } \left[ \left\| G_T( \tau_{T} ) \nabla_{ \Theta } \log p_{ g(\Theta )} ( \tau_{T} ) \right\|^2 \right] \leq {\tt G}^2.
    \]
\end{assumption}
Assumption \ref{assumption:Bd} is a standard condition used in the RL literature. 
Meanwhile, Assumptions \ref{assumption:GradPolicy}--\ref{assumption:StocGrad} can be verified for the Gaussian and softmax policies if $\Theta$ is compact; see Appendix~\ref{app:assumption}. To handle the projected stochastic PG algorithm \eqref{eq::proj_tlrpg} for non-convex optimization problems, our analysis relies on the framework in \cite{davis2019stochastic} that utilizes the following Moreau envelope of the objective function in \eqref{eq::pg_cost}: for any $\lambda > 0$, we set
\[
    \bar{G}_V^{ \lambda } ( \Theta ) := \min_{ \vartheta \in \ccalO } \left\{ -\bar{G}_V (\vartheta) + \frac{1}{2 \lambda} \| \Theta - \vartheta \|_F^2 \right\} .
\]
For sufficiently small $\lambda$, it holds for any $\Theta$ \cite{davis2019stochastic},
\begin{equation} \label{eq:moreau_bd}
\left\| \frac{1}{\lambda} ( \Theta - {\cal P}_{\ccalO}( \Theta + \lambda \nabla \bar{G}_V(\Theta) ) \right\|^2 \leq 3 \, \| \nabla \bar{G}_V^{ \lambda } ( \Theta ) \|^2,
\end{equation} 
which leads to the following convergence guarantee for the projected TLRPG method.
\begin{theorem}
    Under Assumptions~\ref{assumption:Bd}--\ref{assumption:StocGrad}. For any $\bar{L} > L_G := D( T^2 L_0 {\rm R} + L_1 )$ and maximum iteration number $H \ge 1$, the following holds for $\{ \Theta^h \}_{h =1}^H$ generated by \eqref{eq::proj_tlrpg}:
    \begin{equation}
    \begin{split}
    & \textstyle (1/H) \sum_{h=1}^H \mathbb{E} \left[ \| \nabla_{\Theta} \bar{G}_V^{ 1/\bar{L} } ( \Theta^{h} ) \|^2 \right] \\
    % & \mathbb{E} \left[ \left\| \bar{L} \left( \Theta^{\sf h} - {\cal P}_{\ccalO} \left( \Theta^{\sf h} + \frac{1}{\bar{L}} \nabla \bar{G}_V( \Theta^{\sf h} \right) \right) \right\|^2 \right] \\
    & \leq \frac{ \bar{L} }{ \bar{L} - L_G } \frac{ \bar{G}_V^{ 1/\bar{L}} (\Theta^1)  - \bar{G}_V^{1/\bar{L}, \star} + \frac{ \bar{L} {\tt G}^2 }{ 2 } \sum_{h=1}^H (\eta^h)^2 }{ \sum_{h=1}^H \eta^h },
    \end{split}
    \end{equation}
    where $\bar{G}_V^{1/\bar{L}, \star} := \min_{ \Theta \in \ccalO} \bar{G}_V^{ 1/\bar{L} } ( \Theta )$.
\end{theorem}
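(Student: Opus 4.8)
The plan is to read the projected update \eqref{eq::proj_tlrpg} as projected stochastic gradient descent on the smooth nonconvex objective $-\bar{G}_V$ and to track its progress through the Moreau envelope $\bar{G}_V^{1/\bar{L}}$, following the weakly-convex analysis of \cite{davis2019stochastic}. The one genuinely new ingredient I must supply is the smoothness constant: I would first prove that $\nabla\bar{G}_V$ is $L_G$-Lipschitz on $\ccalO$ (equivalently, that $-\bar{G}_V$ is $L_G$-weakly convex), since this is what makes the envelope well defined and the whole descent argument run.

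\emph{Smoothness (the crux).} By the policy-gradient theorem \eqref{eq::pg_gradient}, $\nabla_{\bbTheta_i}\bar{G}_V(\Theta)=\mathbb{E}_{\tau_T\sim g(\Theta)}[\,G_T(\tau_T)\,\nabla_{\bbTheta_i}\log p_{g(\Theta)}(\tau_T)\,]$, so the dependence on $\Theta$ enters both through the score and through the sampling law $g(\Theta)$. To bound $\|\nabla_{\bbTheta_i}\bar{G}_V(\Theta)-\nabla_{\bbTheta_i}\bar{G}_V(\Theta')\|$ I would add and subtract $\mathbb{E}_{\tau_T\sim g(\Theta)}[\,G_T(\tau_T)\,\nabla_{\bbTheta_i}\log p_{g(\Theta')}(\tau_T)\,]$ and split into two pieces. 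The first holds the measure $g(\Theta)$ fixed and differences the score: combining $|G_T(\tau_T)|\le T{\rm R}$ (Assumption~\ref{assumption:Bd}) with the triangle inequality over the $T$ summands and the per-step Lipschitz score bound $L_0$ (Assumption~\ref{assumption:GradPolicy}) yields at most $T^2 L_0 {\rm R}\,\|\Theta-\Theta'\|$. The second holds the score $\log p_{g(\Theta')}$ fixed and differences the measure, which is controlled verbatim by Assumption~\ref{assumption:Distribution} with $\Theta''=\Theta'$, giving $L_1\|\Theta-\Theta'\|$. Adding the two and summing over the $D$ factor blocks via $\sqrt{\sum_i\|v_i\|^2}\le\sum_i\|v_i\|$ gives $\|\nabla\bar{G}_V(\Theta)-\nabla\bar{G}_V(\Theta')\|\le L_G\|\Theta-\Theta'\|$ with $L_G=D(T^2 L_0 {\rm R}+L_1)$. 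This is the step I expect to be the main obstacle, as it is where the three assumptions are woven together and where the change-of-measure term genuinely forces Assumption~\ref{assumption:Distribution}.

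\emph{Envelope descent.} With $\lambda=1/\bar{L}<1/L_G$, weak convexity guarantees that the constrained proximal point $\hat{\vartheta}^h:=\argmin_{\vartheta\in\ccalO}\{-\bar{G}_V(\vartheta)+\tfrac{1}{2\lambda}\|\Theta^h-\vartheta\|^2\}$ is single-valued and that $\nabla\bar{G}_V^{1/\bar{L}}(\Theta^h)=\tfrac1\lambda(\Theta^h-\hat{\vartheta}^h)$. Starting from $\bar{G}_V^{1/\bar{L}}(\Theta^{h+1})\le -\bar{G}_V(\hat{\vartheta}^h)+\tfrac{1}{2\lambda}\|\Theta^{h+1}-\hat{\vartheta}^h\|^2$, I would use $\hat{\vartheta}^h\in\ccalO$ and nonexpansiveness of $\mathcal{P}_{\ccalO}$ to get $\|\Theta^{h+1}-\hat{\vartheta}^h\|^2\le\|\Theta^h+\eta^h\tilde{\nabla}_\Theta\bar{G}_V(\Theta^h)-\hat{\vartheta}^h\|^2$, take the conditional expectation (invoking unbiasedness $\mathbb{E}[\tilde{\nabla}_\Theta\bar{G}_V(\Theta^h)\mid\Theta^h]=\nabla\bar{G}_V(\Theta^h)$ and the second-moment bound ${\tt G}^2$ of Assumption~\ref{assumption:StocGrad}, which survives the $U$-trajectory average since the summands share a common mean), bound the inner product $\langle\nabla\bar{G}_V(\Theta^h),\Theta^h-\hat{\vartheta}^h\rangle$ from above by weak convexity in terms of the gap $\bar{G}_V(\Theta^h)-\bar{G}_V(\hat{\vartheta}^h)$, and finally use the proximal optimality $-\bar{G}_V(\hat{\vartheta}^h)+\tfrac{1}{2\lambda}\|\Theta^h-\hat{\vartheta}^h\|^2\le-\bar{G}_V(\Theta^h)$ together with $\|\Theta^h-\hat{\vartheta}^h\|^2=\lambda^2\|\nabla\bar{G}_V^{1/\bar{L}}(\Theta^h)\|^2$. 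Collecting terms produces the one-step descent bound
\[
\mathbb{E}\big[\bar{G}_V^{1/\bar{L}}(\Theta^{h+1})\big]\le \mathbb{E}\big[\bar{G}_V^{1/\bar{L}}(\Theta^{h})\big]-\tfrac{\eta^h}{2}\Big(1-\tfrac{L_G}{\bar{L}}\Big)\mathbb{E}\big[\|\nabla\bar{G}_V^{1/\bar{L}}(\Theta^h)\|^2\big]+\tfrac{\bar{L}{\tt G}^2}{2}(\eta^h)^2,
\]
in which the hypothesis $\bar{L}>L_G$ makes the middle coefficient strictly positive.

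\emph{Telescoping.} Summing this inequality over $h=1,\dots,H$ collapses the envelope terms, and lower-bounding $\mathbb{E}[\bar{G}_V^{1/\bar{L}}(\Theta^{H+1})]\ge\bar{G}_V^{1/\bar{L},\star}$ isolates $\sum_{h=1}^H\eta^h\,\mathbb{E}[\|\nabla\bar{G}_V^{1/\bar{L}}(\Theta^h)\|^2]$. Dividing by the positive coefficient $\tfrac{\bar{L}-L_G}{2\bar{L}}$ and normalizing by $\sum_{h=1}^H\eta^h$ then rearranges everything into the claimed bound, the leading factor $\bar{L}/(\bar{L}-L_G)$ coming from inverting that coefficient and the residual $\tfrac{\bar{L}{\tt G}^2}{2}\sum_{h=1}^H(\eta^h)^2$ persisting as the stochastic-noise term; inequality \eqref{eq:moreau_bd} is what certifies that the resulting control on $\|\nabla\bar{G}_V^{1/\bar{L}}\|^2$ is a bona fide near-stationarity measure for \eqref{eq::pg_cost}.
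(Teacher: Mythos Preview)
Your proposal is correct and follows essentially the same route as the paper: establish that $\nabla\bar{G}_V$ is $L_G$-Lipschitz on $\ccalO$ via the same add--and--subtract splitting (fix the measure and difference the score using Assumptions~\ref{assumption:Bd}--\ref{assumption:GradPolicy} for the $T^2L_0{\rm R}$ term; fix the score and difference the measure using Assumption~\ref{assumption:Distribution} for the $L_1$ term; sum over the $D$ blocks), then feed this weak-convexity constant into the Moreau-envelope descent argument of \cite[Theorem~3.1]{davis2019stochastic}. The only difference is presentational: the paper simply cites \cite{davis2019stochastic} for the envelope descent and telescoping, whereas you reproduce those steps explicitly.
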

\begin{proof}
The proof follows directly from \cite[Theorem 3.1]{davis2019stochastic}. In particular, it suffices to show that $\nabla \bar{G}_V(\Theta)$ is $L_G$-Lipschitz continuous, which implies that $\bar{G}_V(\Theta)$ is $L_G$-weakly convex. We observe that for any $i = 1,...,D$,
\begin{equation*}
\begin{split}
& \| \nabla_{\bbTheta_i} \bar{G}_V(\Theta) - \nabla_{\bbTheta_i} \bar{G}_V(\Theta') \| \\
& \leq \left\| \mathbb{E}_{ \tau_T \sim g(\Theta) } \left[ G_T ( \tau_T ) ( \nabla_{ \bbTheta_i } \log p_{ g(\Theta)} ( \tau_T ) - \log p_{ g(\Theta')} ( \tau_T ) )  \right] \right\| \\
& + \| \mathbb{E}_{ \tau_T \sim g(\Theta') } \left[ G_T ( \tau_T ) \nabla_{ \bbTheta_i } \log p_{ g(\Theta')} ( \tau_T ) \right] \\
& \quad - \mathbb{E}_{ \tau_T \sim g(\Theta) } \left[ G_T ( \tau_T ) \nabla_{ \bbTheta_i } \log p_{ g(\Theta')} ( \tau_T ) \right] \|.
\end{split}
\end{equation*}
We note that the last term is bounded by $L_1 \| \Theta - \Theta' \|$ under Assumption~\ref{assumption:Distribution}, while the first term is bounded by $T^2 L_0 {\rm R} \| \Theta - \Theta' \|$ under Assumptions \ref{assumption:Bd} and \ref{assumption:GradPolicy}. Summing up over $i=1,...,D$ and rearranging terms shows that 
$\| \nabla_{\Theta} \bar{G}_V(\Theta) - \nabla_{\Theta} \bar{G}_V(\Theta') \| \leq L_G \| \Theta - \Theta' \|$.

Set $\eta^h = 1/\sqrt{H}$. Together with \eqref{eq:moreau_bd}, for sufficiently large $H$, the above shows that in $H$ iterations, the projected TLRPG method finds an ${\cal O}(1/\sqrt{H})$-stationary solution to \eqref{eq::pg_cost}.

\end{proof}

%%%%%%%%%%%%%%%%%%%%%%%%%% EXPERIMENTS %%%%%%%%%%%%%%%%%%%%%%%%%%%%%%%%
\section{Numerical Experiments}

The aim of this section is to illustrate empirically that low-rank tensor policies can attain similar returns to NN-based methods while demanding fewer resources. More specifically, we show that low-rank tensor policies require less amount of parameters and time to converge. We assess this claim by: i) thoroughly investigating diverse policy-based algorithmic configurations across various classical control problems, followed by ii) an examination of a (more realistic) higher-dimensional wireless communications setup. The code, along with a more detailed description of the experiments, is available in the GitHub repository linked to this paper \cite{rozada2024repo}.

\subsection{Classical Control Problems}

In order to evaluate the performance of the proposed methods we have selected two well-known control problems implemented in the toolkit OpenAI Gym \cite{brockman2016openai}:

\begin{itemize}
    \item \emph{MountainCar}, where a car tries to reach the top of a hill. The state-action space has $3$ dimensions, and the agent attains a positive reward upon reaching the goal, but incurs a penalty proportional to the fuel expended.
    \item \emph{Pendulum}, where the goal is to keep a pendulum pointing upwards. The state-action space has $4$ dimensions, and the reward function weights the position of the pendulum and the torque developed to control it.
\end{itemize}

The described environments admit a continuous and a discrete action-space version.

\noindent \textbf{Empirical low-rankness.} Firstly, we have verified our assertion regarding the low-rank nature of parameter tensors empirically. To that end, we have trained a NN to address both the discrete and continuous versions of the Pendulum problem using PPO with softmax and Gaussian policies, respectively. Subsequently, we have discretized the state space and employed the learned policies to estimate the parameters across all discretized states. These parameters were then collected into a tensor of Gaussian means of size $100 \times 100 \times 100$ and a tensor of softmax logits of size $100 \times 100 \times 100 \times 3$. The maximum possible rank of the former is $K_\text{max}=10,000$, while for the latter is $K_\text{max}^{'}=30,000$. Following this, we have decomposed the tensors using the PARAFAC decomposition for various ranks. To evaluate the approximation performance, we have computed the Normalized Frobenius Error ($\mathrm{NFE}$) between a given tensor $\tenbX$ and its approximation $\check{\tenbX}$, which is defined as:
\begin{equation}
    \mathrm{NFE}={||\tenbX - \check{\tenbX}||_F}\big/{||\tenbX||_F}.
    \label{eq:NFE}
\end{equation}

The results are depicted in Fig. \ref{fig::fig_low_rank}. As observed, $\mathrm{NFE}$ decreases sharply as the rank increases. This indicates that of most of the energy is concentrated in the first factors. Consequently, relatively low-rank approximations of the parameter tensors closely match the original discretized values estimated by NN methods in terms of Frobenius norm. Remarkably, despite the potentially high maximum rank of both tensors, the tensor of Gaussian means exhibits a negligible  $\mathrm{NFE}$ when the rank is set to $5$, with the tensor of softmax logits exhibiting a similar error when  the rank is set to $20$. In plain words, NN methods learn low-rank representations of the policies, even without explicitly imposing it. This underscores the potential of low rank for designing parsimonious policies.
Interestingly, this finding is in agreement with the observations in \cite{rozada2024tensor}, which reports that the VFs of many classical control problems, when arranged in a tensor form, exhibit a low-rank.

\begin{figure}
    \centering
    \includegraphics[width=7cm]{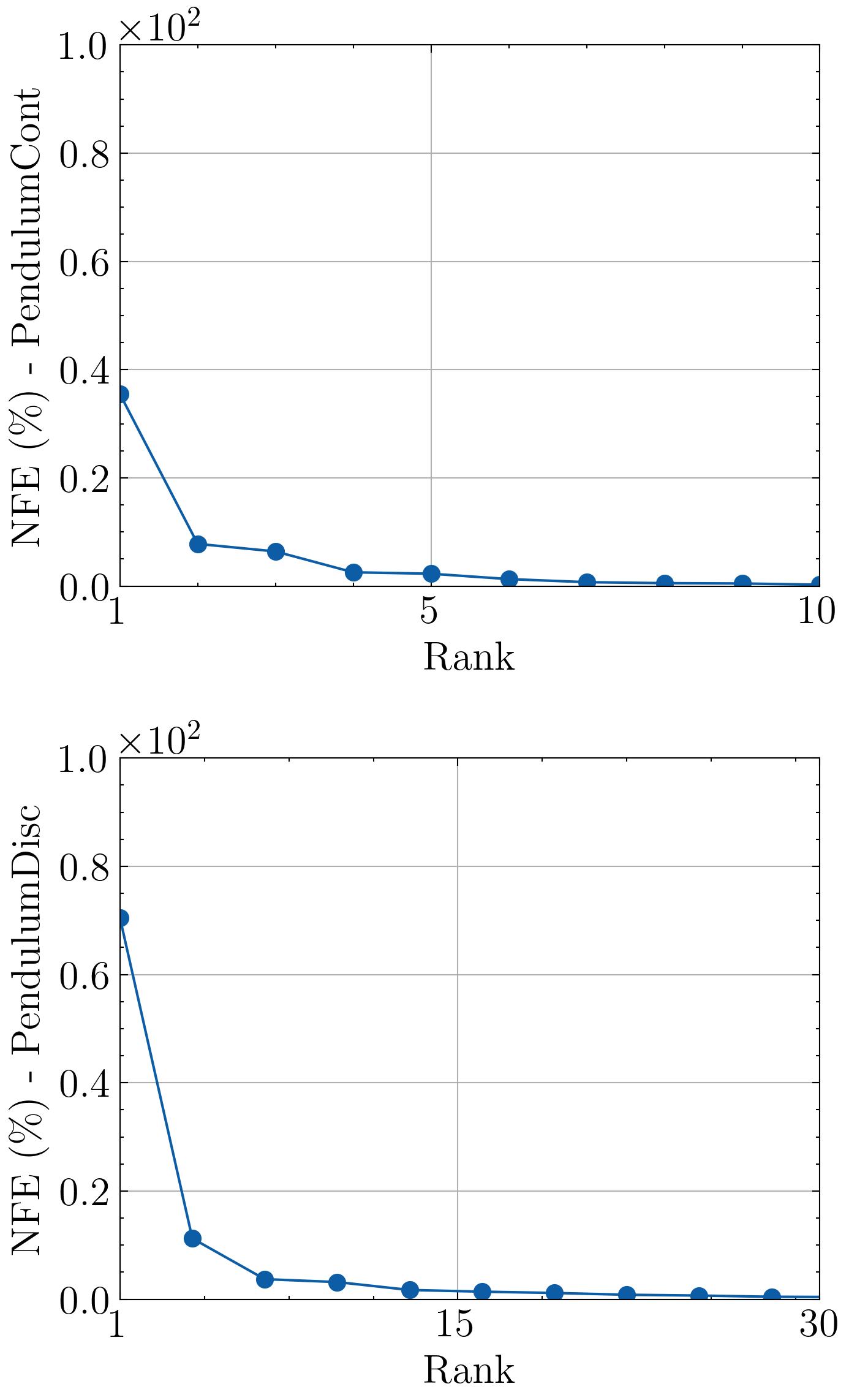}
    \caption{$\mathrm{NFE}$ between the tensor of Gaussian and softmax parameters obtained via NN-based PPO and their low-rank PARAFAC decomposition in the continuous and discrete variants of the Pendulum environment. The error decreases as the rank of the approximation increases.}
    \label{fig::fig_low_rank}
\end{figure}

\noindent \textbf{Continuous action spaces.} We have used Gaussian policies for dealing with continuous action spaces. We have compared the proposed tensor low-rank policies against their NN-based counterparts. We have also considered a baseline linear policy model with radial basis function (RBF) features \cite{buhmann2000radial}. Specifically, we have compared TLRAC, as described in Algorithm \ref{alg::ac}, with a linear-based AC algorithm (RBF-AC), and a NN-based AC algorithm (NN-AC); TRTLRPO, as described in Algorithm \ref{alg::trpo}, against a linear-based TRPO algorithm (RBF-TRPO), and a NN-based TRPO algorithm (NN-TRPO); and PTLRPO, as described in Algorithm \ref{alg::ppo}, against a linear-based PPO algorithm (RBF-PPO), and a NN-based PPO algorithm (NN-PPO). To assess the convergence speed of the proposed algorithms we have measured the return, or cumulative reward, per episode. Convergence is then quantified by how rapidly the algorithm achieves high returns. We have tuned the rank $K$ of our tensor low-rank policies and the hyper-parameters of the NNs via exhaustive search to identify the most compact models, in terms of parameters, capable of yielding high returns. For fairness in comparison, we have set $\sigma$, which controls the exploration in Gaussian policies, and the hyper-parameters that control the computational budget, such as the number of trajectories for the update $U$, to the same values across both tensor low-rank and NN policies. We have conducted $100$ experiments of each combination of setup-algorithm, and the median returns are depicted in Fig. \ref{fig::fig_control_problems_cont}.

While linear methods involve fewer parameters, tensor low-rank methods exhibit significantly faster convergence in most setups. Moreover, tensor low-rank policies exhibit superior performance in terms of achieved returns. Tensor-based methods demand fewer parameters than their NN counterparts across all scenarios. This parameter efficiency translates into superior convergence speed for our proposed algorithms. Remarkably, despite the efficiency gains, we do not significantly compromise on return, with only one exception: NN-TRPO achieves marginally higher returns than TRTLRPO in the PendulumCont. Moreover, our analysis reveals another advantage of tensor low-rank models. The variance in return achieved by TLRAC is notably smaller than that of NN-AC in the PendulumCont environment. 

\begin{figure*}[!t]
    \centering
    \includegraphics[width=18cm]{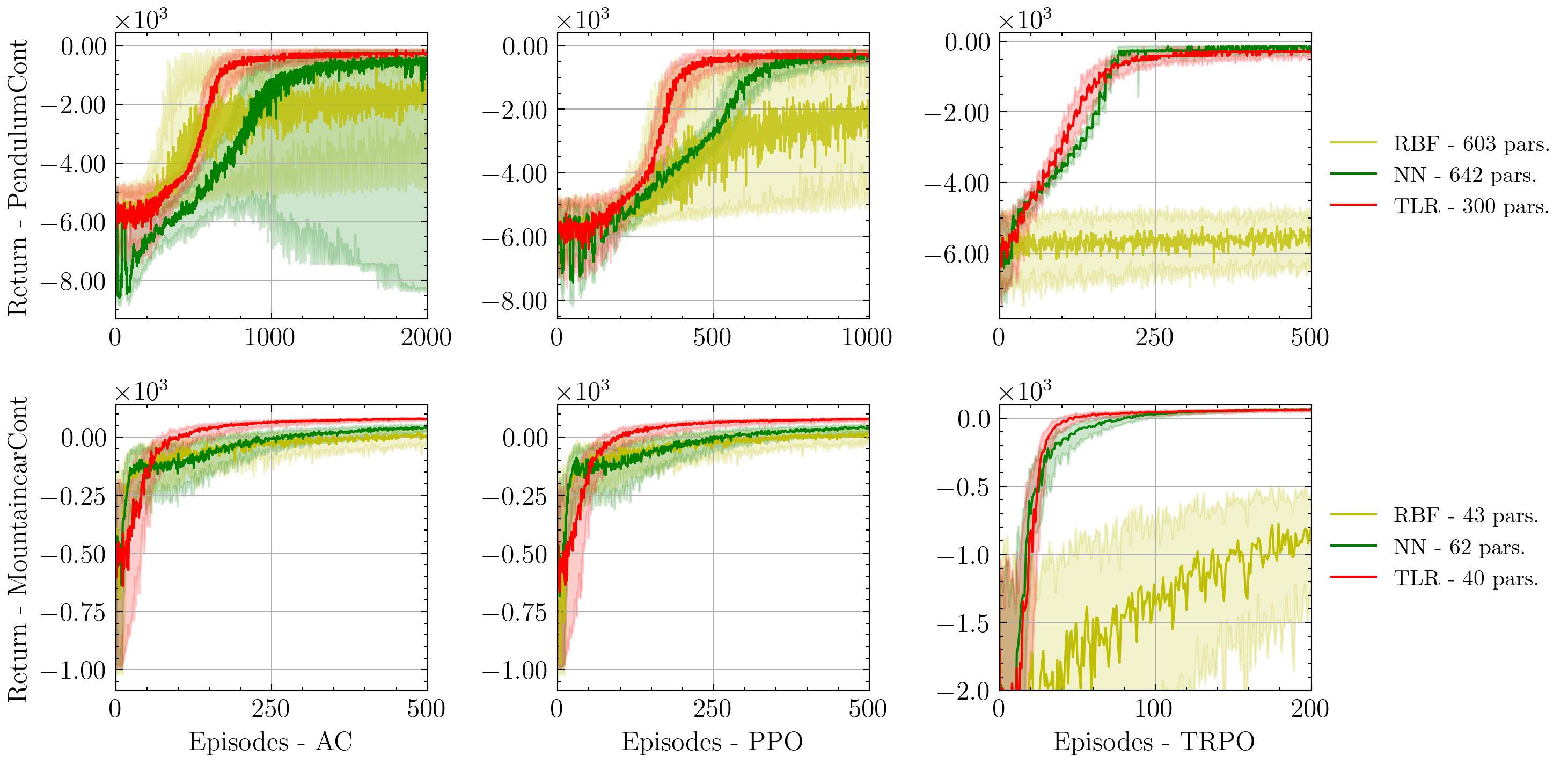}
    \caption{Median return per episode of tensor low-rank policies (TLR) against NN-based policies (NN) across $100$ experiments in various continuous action-space RL problems and algorithmic setups. The displayed confidence interval are the interquartile ranges. TLR policies converge faster than NN-based policies in almost all the scenarios. Furthermore, they require less parameters.}
    \label{fig::fig_control_problems_cont}
\end{figure*}

\noindent \textbf{Discrete action spaces.} To deal with the discrete action-space variants of the previously introduced environments, we have used softmax policies. Again, we have compared TLRAC against RBF-AC and NN-AC, TRTLRPO against RBF-TRPO and NN-TRPO, and PTLRPO against RBF-PPO and NN-PPO. We have used the return per episode as the main metric, and tuned the hyper-parameters via exhaustive search. Also, we have set the temperature parameter $\beta$ and the hyper-parameters that control the computational budget to the same values, and we have conducted $100$ experiments of each combination of setup-algorithm. The median returns are shown in Fig. \ref{fig::fig_control_problems_disc}.

The results are consistent with the previous experiment. Tensor low-rank policies routinely outperform linear methods in terms of convergence speed and achieved return. Furthermore, they not only demand fewer parameters than NN-based algorithms but converge faster as well, except for the PendulumDisc environment, where NN-TRPO demonstrates faster convergence than TRTLRPO. Regarding the return, the only case where NN-PPO achieves marginally higher returns than PTLRPO is the PendulumDisc environment. Again, we observe that the variance in the return achieved by TLRAC is similar to that of NN-AC, except for the MountaincarDisc environment, where our method exhibits a clearly smaller variance. These findings highlight the potential of multi-linear tensor low-rank policy models. They strike a balance, being as parameter-efficient as linear methods yet as effective as NNs in terms of returns. This underscores the relevance of low-rank models in RL scenarios.

\begin{figure*}[!t]
    \centering
    \includegraphics[width=18cm]{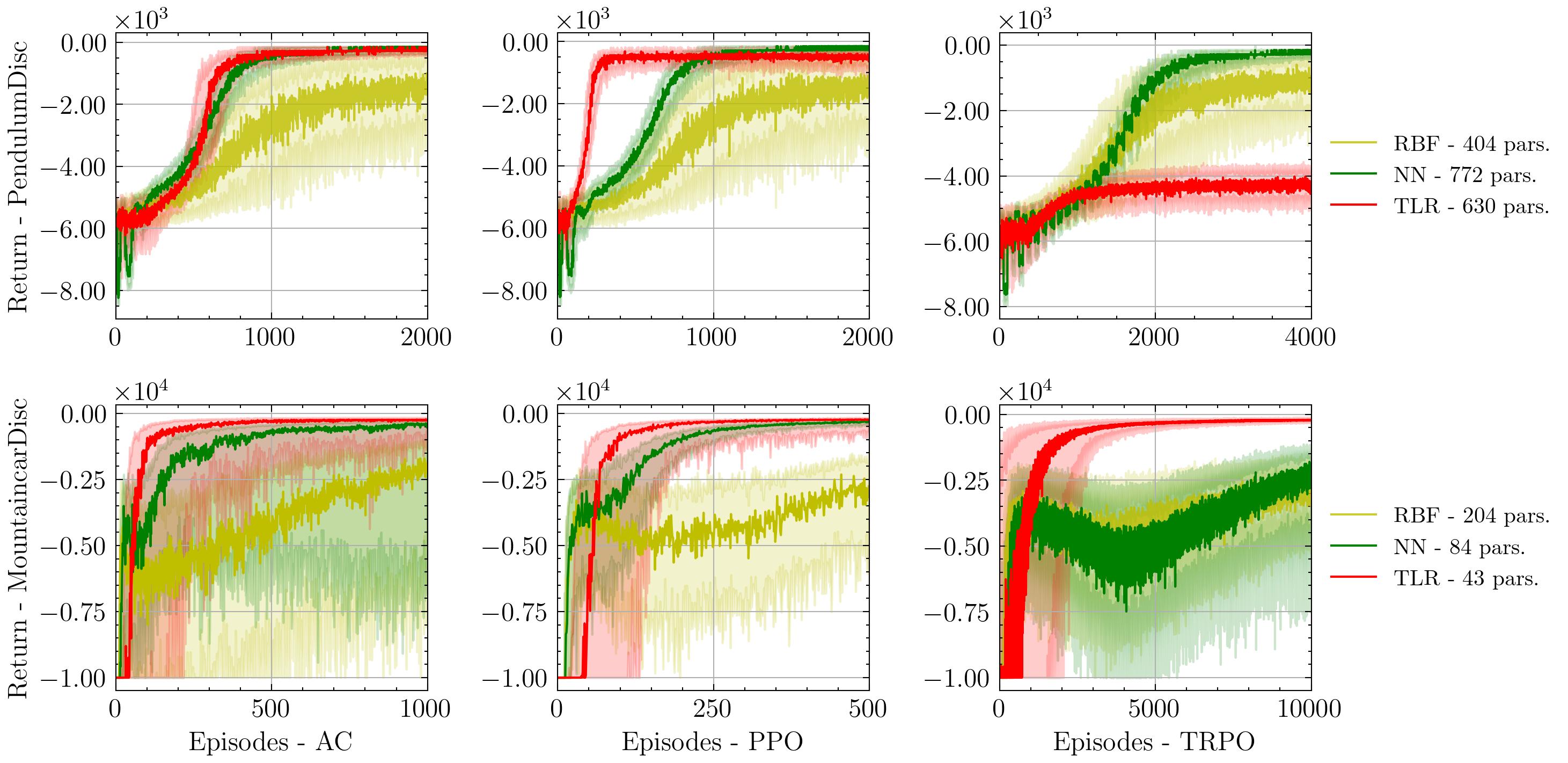}
    \caption{Median return per episode of tensor low-rank policies (TLR) against NN-based policies (NN) across $100$ experiments in various discrete action-space RL problems and algorithmic setups. The displayed confidence interval are the interquartile ranges. TLR policies converge faster than NN-based policies in almost all the scenarios. Furthermore, they require less parameters.}
    \label{fig::fig_control_problems_disc}
\end{figure*}

\subsection{Wireless Communications Setup}

This section examines a wireless communications setup with opportunistic multiple-access. This is not only a more realistic setup, but also a larger problem that helps us to assess how well our methods scale. Specifically, a single agent dispatches packets to an access point using two orthogonal channels ($C=2$). The packets, which arrive every $T=10$ time-slots, accumulate in a queue. Access is opportunistic, and the channels may often be occupied. Sending packets consumes energy from a battery, that recharges over time through energy harvesting. The corresponding MDP encompasses a state space of  $D_\ccalS=6$ dimensions: the fading intensity and occupancy status across the two channels, battery energy levels, and queue packet counts. Each time-step, the agent determines the power allocation for the duo of channels, thus defining an action space of $D_\ccalA=2$ continuous dimensions. Transmission rates follow Shannon's capacity formula, and we establish that 80 \% of the packets are lost when channels are occupied. The reward function accounts for the battery level (positively weighted) and the queue size (negatively weighted), compelling the agent to trade-off transmission throughput and battery levels. More details can be found in \cite{rozada2024repo}.

Given the continuous and multi-dimensional nature of the action space, we have used a multivariate Gaussian policy. We focused on comparing PTLRPO against NN-PPO, due to the widespread utilization of PPO as a policy-based algorithm. Again, we have used returns per episode to evaluate the proposed algorithms, which were tuned for compactness through exhaustive search. Median returns are depicted in Fig. \ref{fig::fig_wireless} based on $100$ experiments. In essence, PTLRPO not only requires fewer parameters than NN-PPO but also demonstrates superior empirical convergence speed. Remarkably, despite the modest difference in parameter count between PTLRPO and NN-PPO, the former achieves steady-state much more rapidly. This shows that by capitalizing on the structure of the MDP via low-rank methods, PTLRPO enables faster and more stable training, thereby reducing the computational resources needed for learning.

\begin{figure}
    \centering
    \includegraphics[width=8.5cm]{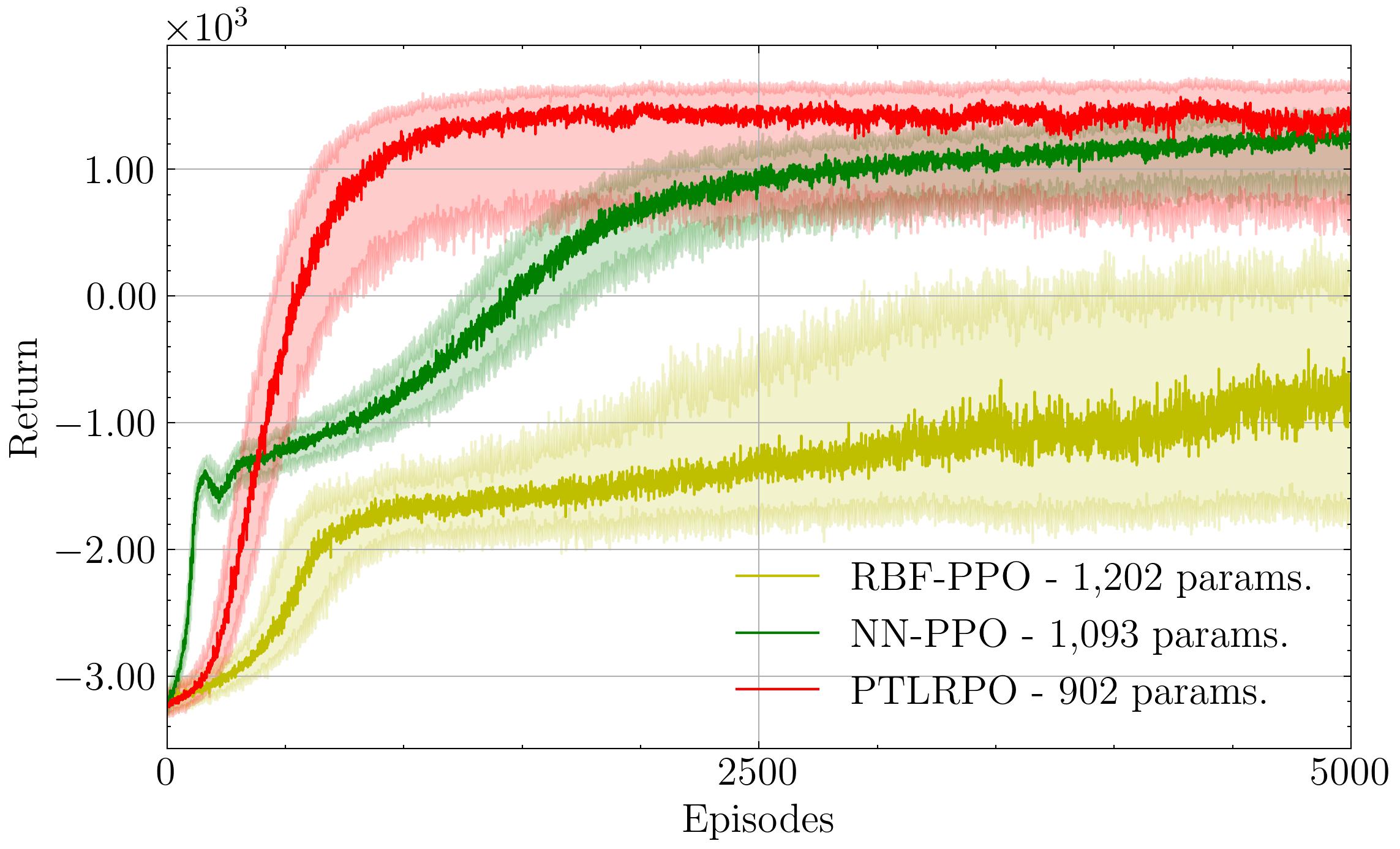}
    \caption{Median return per episode of tensor low-rank policies (TLR) against NN-based policies (NN) across $100$ experiments in the wireless communications setup. The displayed confidence interval are the interquartile ranges. PTLRPO converges faster than NN-PPO while requiring less parameters.}
    \label{fig::fig_wireless}
\end{figure}

%%%%%%%%%%%%%%%%%%%%%%%%%% CONCLUSIONS %%%%%%%%%%%%%%%%%%%%%%%%%%%%%%%%
\section{Conclusions}

This paper introduced multilinear tensor \emph{low-rank} policy models in the context of policy-based RL. Specifically, it harnessed tensor-completion techniques to approximate Gaussian and softmax policies via multi-linear mappings in policy-gradient RL setups. Within this framework, four tensor \emph{low-rank} algorithms have been proposed: tensor low-rank policy gradient (TLRPG), tensor low-rank actor-critic (TLRAC), trust-region tensor low-rank policy optimization (TRTLRPO), and proximal tensor low-rank policy optimization (PTLRPO). The proposed algorithms are i) easy to adjust since they involve tuning one single model hyper-parameter, the rank of the tensor; ii) convergent under mild conditions as demonstrated theoretically; and iii) potentially faster than their NN-based counterparts as showed through numerical experiments in several standard benchmarks. By capitalizing on the inherent low-rank structure of several RL setups, these approaches aim to mitigate issues commonly encountered in NN-based methods. In a nutshell, building on the fact that low rank is prevalent in RL, this paper put forth a systematic approach to incorporating low-rank multi-linear structure in policy-based RL.

\appendices 

\section{Verifying the Assumptions} \label{app:assumption}
Notice that under Assumption~\ref{assumption:Bd}, $\sup_{ \tau_T } |G_T(\tau_T)| \leq {\rm R} T$ and we recall that
\begin{equation} \label{eq:gradient_expression}
\begin{split}
& \nabla_\Theta \bar{G}_V(\Theta) = \mathbb{E}_{g \left( \Theta \right)} \Big[ {G_T(\tau_T)} \sum_{t=1}^T \nabla_\Theta \text{log} \pi_\Theta(A_t | S_t) \Big] \\
& = \int  G_T( \tau_T ) \nabla \log p_{ g(\Theta) } ( \tau_T ) \, p_{ g(\Theta) }(\tau_T) \, d \tau_T .
\end{split}
\end{equation}
As such, it suffices to prove that $\nabla_{\Theta} \log \pi_\Theta ( A_t^u | S_t^u )$ is Lipschitz and bounded to verify the required assumptions in Sec.~\ref{subsec:analysis}. Hereafter, we assume that $\Theta$ is bounded such that for any $\Theta = \{ \bbTheta_d \}_{d=1}^D \in \ccalO$, $\max_{d=1,...,D} \| \bbTheta_d \|_{\infty} \leq B$. 
Note that this implies $|[\tenbTheta ]_{\bbi_s}| \leq KB^D$.
We observe the following cases. 

For the \emph{Gaussian policy}, we recall that 
\[
\frac{\partial{\text{log} \; \pi_{\Theta } (a | s)}}{\partial{[\bbTheta_d]_{i, k}}} = \frac{a - [\tenbTheta ]_{\bbi_s}}{\sigma^2} \left( \prod_{j = 1 \neq d}^D [\bbTheta_j]_{i^s_d, k}  \right) \ccalI_{i = i^s_d}.
\]
Assume that the actions are always bounded with $|a| \leq A$ for any $a \in {\cal A}$, we can verify 
verify Assumption~\ref{assumption:StocGrad} as 
\begin{equation}
\left| \frac{\partial{\text{log} \; \pi_{\Theta } (a | s)}}{\partial{[\bbTheta_d]_{i, k}}} \right| \leq \frac{ A + KB^D }{ \sigma^2 } B^{D-1},
\end{equation}
which leads to ${\tt G}^2 < \infty$.
Also, Assumption~\ref{assumption:GradPolicy} can be verified by noting that for any $\Theta, \Theta'$
\begin{align}
\nonumber
& \left| \frac{\partial{\text{log} \; \pi_{\Theta } (a | s)}}{\partial{[\bbTheta_d]_{i, k}}}
- \frac{\partial{\text{log} \; \pi_{\Theta' } (a | s)}}{\partial{[\bbTheta_d]_{i, k}}} \right| \leq \ccalI_{i = i^s_d} \left(  \frac{ B^{2D-2}}{ \sigma^2 } \right. \\
\label{eq::ass_2_gaussian}
& \quad \quad \left. + \frac{ (A + KB^D ) B^{D-2} }{ \sigma^2 } \right) \sum_{d=1}^D \big| [\bbTheta_d]_{i^s_d,k} - [\bbTheta_d']_{i^s_d,k} \big| 
\end{align}

This leads to $L_0 < \infty$. 
Finally, to verify Assumption~\ref{assumption:Distribution}, we have 
\begin{align}
\nonumber
& | p_{ g(\Theta) }(\tau_T) - p_{ g(\Theta') }(\tau_T) | \\
& \leq \frac{1}{4} \sum_{t=1}^T | \mu_\Theta(s_t) - \mu_{\Theta'} (s_t) | \leq \frac{D T B^{D-1} }{4} \| \Theta - \Theta' \|
\end{align}
where we used the fact that the Gaussian distribution function is $1/4$-Lipschitz.
Combining with \eqref{eq:gradient_expression} leads to $L_1 < \infty$.

For the \emph{softmax policy}, we similarly recall that
\begin{align}
    \nonumber
    &\frac{\partial{\text{log} \; \pi_{\Theta_\bbz} (a | s)}}{\partial{[\bbTheta_d^\bbz]_{i, k}}} =  \\
    &\sum_{b = 1}^C  \left( \ccalI_{a = b} - \ccalZ([\tenbTheta_\bbz]_{\bbi_s}, \beta)_b \right) \left(\prod_{j = 1 \neq d}^{D+1} [\bbTheta_j^\bbz]_{i^{s, b}_j, k} \right) \ccalI_{i=i^{s, b}_d},  \nonumber
\end{align}
Assume that $|\ccalZ(\cdot)_a| \leq {\tt Z}$ for all $a \in \ccalA$, and recalling $\Theta \in \ccalO$ imply 
\[
\left| \frac{\partial{\text{log} \; \pi_{\Theta_\bbz} (a | s)}}{\partial{[\bbTheta_d^\bbz]_{i, k}}} \right|
\leq C B^D {\tt Z} < \infty
\]
which further leads to Assumption~\ref{assumption:StocGrad}. Assumption~\ref{assumption:GradPolicy} can be verified following the same steps as in the Gaussian policy case [cf. \eqref{eq::ass_2_gaussian}]. Assumption~\ref{assumption:Distribution} can be verified using the fact that the soft-max function is $1$-Lipschitz.

\bibliographystyle{IEEEtran}
\bibliography{journal}

\end{document}